\documentclass{applemlr}

\usepackage{amsmath}
\usepackage{enumerate}
\usepackage{algorithm}
\usepackage{algpseudocode}
\usepackage{amsfonts}
\usepackage{amsthm}
\usepackage{cleveref}
\usepackage{diagbox}
\usepackage{colortbl}
\usepackage{amssymb}
\usepackage{xspace}
\usepackage{wrapfig}
\usepackage{adjustbox}
\usepackage{tabularx}
\usepackage{booktabs}
\usepackage{mathtools}
\usepackage{tikz}
\usepackage{enumitem}
\usepackage{silence}
\usepackage{dsfont}
\usepackage[table]{xcolor}
\usepackage[dvipsnames]{xcolor}
\usepackage{multirow}
\usepackage{makecell}
\usepackage{xfakebold}

\definecolor{textgray}{HTML}{6E6E73}
\usetikzlibrary{positioning, calc}
\usetikzlibrary{decorations.pathmorphing}

\makeatletter
\patchcmd{\wrong@fontshape}{\@gobbletwo}{}{}{}
\makeatother
\numberwithin{equation}{section}
\makeatletter
\AtBeginDocument{
  \urlstyle{sf}
  
}
\makeatother

\definecolor{light}{RGB}{125, 125, 125}
\crefname{tcb@cnt@pbox}{code}{code}
\Crefname{tcb@cnt@pbox}{Code}{Code}
\crefname{assumption}{assumption}{assumption}
\Crefname{assumption}{Assumption}{Assumptions}

\newtcolorbox[auto counter]{pbox}[2][]{
  colback=white,
  title=Code~\thetcbcounter: #2,
  #1,fonttitle=\sffamily,
  fontupper=\sffamily,
  arc=2pt,
  colframe=bgcolor,
  coltitle=fgcolor,
  colbacktitle=bgcolor,
  toptitle=0.25cm,
  bottomtitle=0.125cm
}

\makeatletter
\newcommand\applefootnote[1]{%
  \begingroup
  \renewcommand\thefootnote{}%
  \renewcommand\@makefntext[1]{\noindent##1}%
  \footnote{#1}%
  \addtocounter{footnote}{-1}%
  \endgroup
}
\makeatother

\definecolor{cverbbg}{gray}{0.90}

\newcommand*\methodname{NTM}
\newcommand*\fullname{Normalizing Trajectory Models}

\NewDocumentCommand{\todo}{ mO{} }{\textcolor{red}{\textsuperscript{\textit{TODO}}\textsf{\textbf{\small[#1]}}}}

\usepackage{bm}
\ifcsname proposition\endcsname\else\newtheorem{proposition}{Proposition}\fi

\providecommand{\REQUIRE}{\Require}

\providecommand{\STATE}{\State}
\providecommand{\RETURN}{\State \textbf{return} }
\providecommand{\COMMENT}[1]{\Comment{#1}}
\providecommand{\FOR}{\For}
\providecommand{\ENDFOR}{\EndFor}

\usepackage{inconsolata}
\crefformat{equation}{Eq.~(#2#1#3)}
\crefname{section}{\S}{\S\S}
\Crefname{section}{\S}{\S\S}
\usepackage{nicefrac}
\usepackage{graphicx}
\usetikzlibrary{arrows.meta,decorations.pathreplacing,shapes.geometric}
\usepackage{caption}
\usepackage{subcaption}
\usepackage{pifont}
\usepackage{listings}
\usepackage{array}
\usepackage{environ}
\usepackage{float}

\usepackage{amsmath,amsfonts,bm}

\def\eqref#1{equation~\ref{#1}}

\def\1{\bm{1}}

\def\vg{{\bm{g}}}
\def\vh{{\bm{h}}}

\def\vt{{\bm{t}}}
\def\vu{{\bm{u}}}
\def\vv{{\bm{v}}}

\def\vx{{\bm{x}}}
\def\vy{{\bm{y}}}
\def\vz{{\bm{z}}}

\def\mI{{\bm{I}}}

\DeclareMathAlphabet{\mathsfit}{\encodingdefault}{\sfdefault}{m}{sl}
\SetMathAlphabet{\mathsfit}{bold}{\encodingdefault}{\sfdefault}{bx}{n}

\title{\fullname{}}

\author[1]{Jiatao Gu}
\author[1]{Tianrong Chen}
\author[2]{Ying Shen}
\author[1]{David Berthelot}
\author[1]{Shuangfei Zhai}
\author[1]{Josh Susskind}

\affiliation[1]{Apple}
\affiliation[2]{UIUC}

\abstract{Diffusion-based models decompose sampling into many small Gaussian denoising steps, an assumption that breaks down when generation is compressed to a few coarse transitions.
Existing few-step methods address this through distillation, consistency training, or adversarial objectives, but sacrifice the likelihood framework in the process.
We introduce \fullname{} (\methodname{}), which models each reverse step as an expressive conditional normalizing flow with exact likelihood training.
Architecturally, \methodname{} combines shallow invertible blocks within each step with a deep parallel predictor across the trajectory, forming an end-to-end network trainable from scratch or initializable from pretrained flow-matching models.
Its exact trajectory likelihood further enables self-distillation: a lightweight denoiser trained on the score function induced by the model itself produces high-quality samples in four steps.
On text-to-image benchmarks, \methodname{} matches or outperforms strong image generation baselines in just four sampling steps while uniquely retaining exact likelihood over the generative trajectory.}

\metadata[Code]{\url{https://github.com/apple/ml-starflow}}
\metadata[Correspondence]{\sffamily  \email{jgu32@apple.com}}
\date{\sffamily\today}

\begin{document}

\maketitle

\applefootnote{Work done while JG holding a joint affiliation at University of Pennsylvania, and YS working as a research intern at Apple.}

\begin{figure}[h!]
\centering
\includegraphics[width=\textwidth]{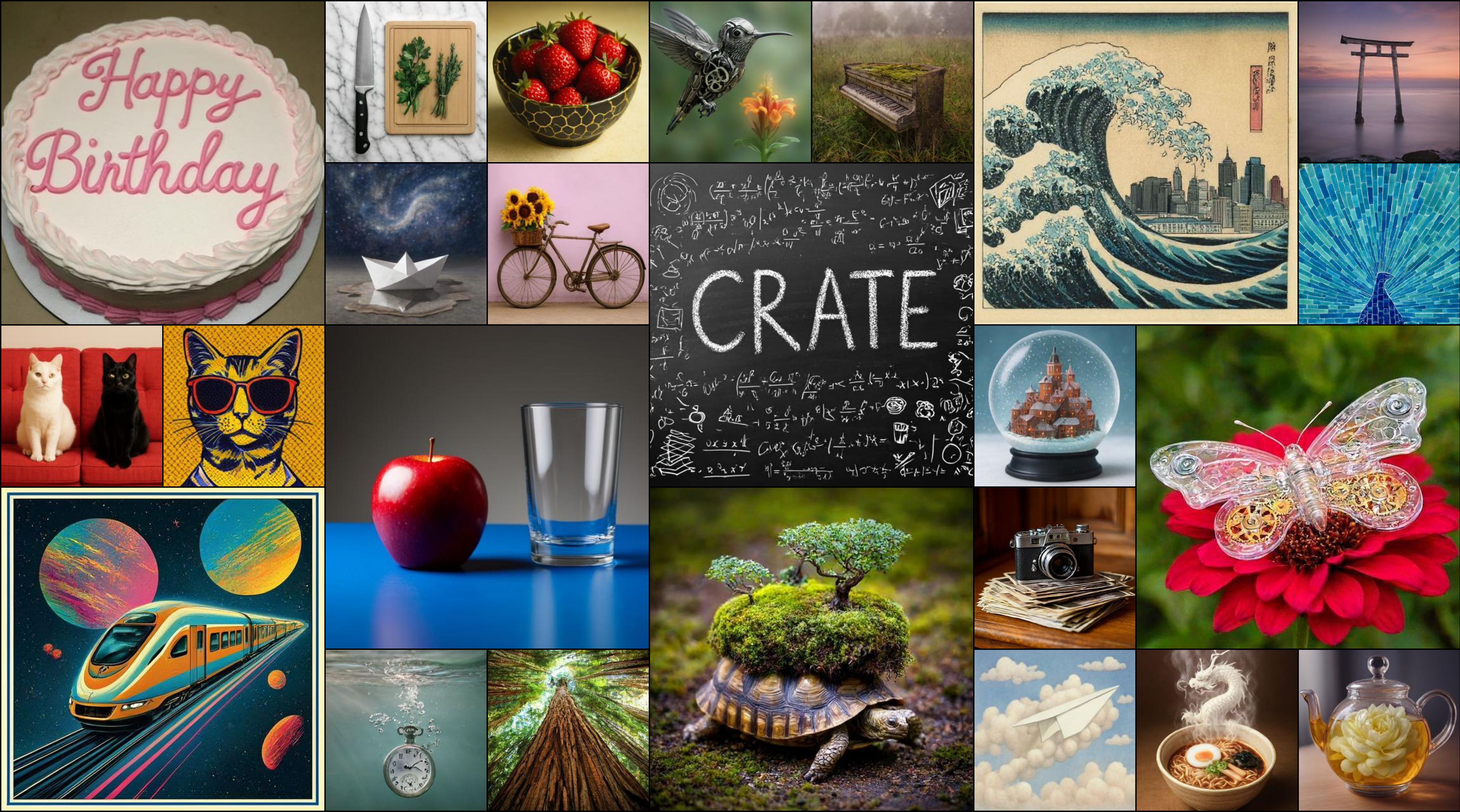}
\caption{
\textbf{Text-to-image generation with \methodname{} with $4$ denoising steps.}
We show samples from models trained from scratch at 256$\times$256, and from models obtained by finetuning pretrained flow-matching checkpoints at 512$\times$512.
}
\label{fig:teaser}
\end{figure}

\section{Introduction}

Diffusion-based models~\citep{ho2020denoising,song2021scorebased,lipman2023flow,liu2023flow,albergo2023stochastic} have become the dominant paradigm for high-fidelity image generation~\citep{rombach2022high,esser2024scaling,podell2023sdxl}.
These methods decompose generation into many small denoising steps, each modeled as a Gaussian transition whose mean is predicted by a neural network.
When the step size is small, this Gaussian approximation is accurate: the reverse conditional $p(\vx_s \mid \vx_t)$ is close to Gaussian because the transition covers only a small portion of the diffusion trajectory.
However, reducing the number of sampling steps to improve efficiency forces each transition to span a larger interval, and the true reverse conditional becomes a mixture of Gaussians that can be multimodal and heavy-tailed.
The single-Gaussian assumption then becomes a fundamental bottleneck for few-step generation quality.

A growing body of work addresses the efficiency problem, but existing approaches sacrifice the likelihood framework.
Distillation methods~\citep{salimans2022progressive,yin2024one} and consistency models~\citep{song2023consistency,luo2023latent} learn to map noise to data in fewer steps, yet provide no tractable density over the generative trajectory.
DDGAN~\citep{xiao2022ddgan} replaces the Gaussian reverse with an implicit distribution learned via adversarial training, but introduces mode-seeking behavior and training instability that limit scalability.
No existing method achieves few-step generation with an exact likelihood model of the reverse process.

We introduce \textbf{\fullname{} (\methodname{})}, a framework that models $p(\vx_s \mid \vx_t)$ as a conditional normalizing flow with exact log-likelihood.
The core idea is to learn a latent space---via an invertible transporter---where the reverse conditional becomes simple enough to be modeled by a Gaussian predictor.
Unlike a compressive encoder, the transporter preserves dimensionality and invertibility, which together with the Gaussian predictor yields exact log-likelihood training through the change-of-variables formula.
This bridges self-supervised representation learning and probabilistic generative modeling: the framework resembles a predictor--encoder architecture~\citep{grill2020bootstrap,assran2023self}, but the invertibility constraint turns it into a normalizing flow.

\begin{figure}[h]
\centering
\includegraphics[width=\linewidth]{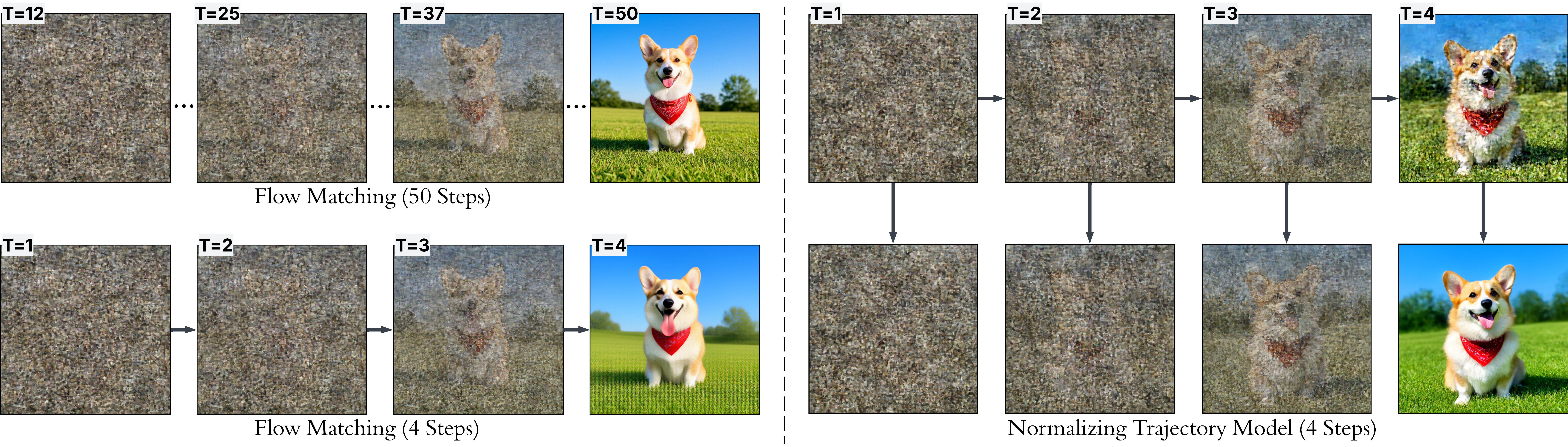}
\caption{\textbf{Denoising trajectories.}
\textbf{Left}: Flow matching with 50 steps and 4 steps.
\textbf{Right}: \methodname{} achieves comparable quality in 4 steps by modeling the non-Gaussian reverse conditional.}
\label{fig:trajectory_comparison}
\end{figure}

\methodname{} can be trained from scratch using stochastic forward trajectories, or initialized from any pretrained flow-matching model by setting the transporter to identity and the predictor to the pretrained Gaussian posterior.
The exact trajectory likelihood further enables score-based denoising: since the generated trajectory is an inherently noisy sequence from the Markov forward process, the gradient of the NTM loss provides a joint score that denoises all timesteps simultaneously by exploiting their correlations.
A lightweight learned denoiser can distill this signal into a single forward pass, producing high-quality samples in as few as four steps.
Experiments on class-conditional and text-to-image generation demonstrate that \methodname{} matches or outperforms strong few-step baselines in image quality and compositional accuracy, achieving 0.82 on GenEval~\citep{ghosh2023geneval} with only 4 denoising steps when trained from scratch---significantly outperforming the prior normalizing flow model STARFlow (0.56, requiring 256 AR steps)---while uniquely retaining exact likelihood over the generative trajectory.

Our contributions are:
\begin{itemize}[leftmargin=*,nosep]
    \item A framework that models the non-Gaussian reverse conditional $p(\vx_s \mid \vx_t)$ via an invertible transporter and a Gaussian predictor, yielding exact log-likelihood while bridging representation learning and probabilistic modeling.
    \item A finetuning recipe that initializes from pretrained diffusion or flow-matching models via identity transporter and zero-initialized scale correction, preserving pretrained quality at initialization.
    \item Score-based trajectory denoising that exploits the exact likelihood and Markov covariance to jointly correct generated trajectories, distillable into a learned denoiser for four-step generation without additional training data.
\end{itemize}

\section{Preliminaries}
\label{sec:prelim}

\subsection{Flow Matching and Diffusion Models}
\label{sec:prelim_fm}
Flow matching~\citep{lipman2023flow,liu2023flow,albergo2023stochastic} defines a forward interpolation between clean data $\vx_0$ and Gaussian noise $\bm{\epsilon} \sim \mathcal{N}(\mathbf{0}, \mI)$:
\begin{equation}
\label{eq:forward}
    \vx_t = (1-t)\,\vx_0 + t\,\bm{\epsilon}, \quad q(\vx_t \mid \vx_0) = \mathcal{N}\!\big((1-t)\,\vx_0,\; t^2 \mI\big), \quad t \in [0, 1].
\end{equation}
A neural network $v_\theta(\vx_t, t)$ is trained to predict the velocity field by minimizing
\begin{equation}
\label{eq:fm_loss}
    \mathcal{L}_{\mathrm{FM}} = \mathbb{E}_{t,\, \vx_0,\, \bm{\epsilon}} \big\| v_\theta(\vx_t, t) - (\bm{\epsilon} - \vx_0) \big\|^2,
\end{equation}
and samples are generated by integrating the learned ODE $\mathrm{d}\vx = v_\theta(\vx, t)\,\mathrm{d}t$ from $t{=}1$ (noise) to $t{=}0$ (data).
Mathematically, diffusion models~\citep{ho2020denoising} can be designed to share the same marginals $q(\vx_t \mid \vx_0)$ under equivalent noise schedules, but define a stochastic forward process whose discretized reverse takes the form of a Gaussian transition kernel $p_\theta(\vx_s \mid \vx_t) = \mathcal{N}(\bm{\mu}_\theta(\vx_t, t, s),\, \sigma^2(t,s)\,\mI)$.

In both frameworks, generation quality depends on the number of discretization steps: flow matching assumes the velocity field is locally linear within each step, while diffusion models assume the reverse conditional is Gaussian.
With many steps these approximations are accurate; with few steps each transition must cover a large interval, and the true mapping from $\vx_t$ to $\vx_s$ becomes too complex for either a linear or Gaussian model to capture.
To formalize and address this limitation, we adopt a stochastic trajectory framework that makes the per-step distribution an explicit modeling target.

\subsection{Stochastic Trajectories and the Gaussian Bottleneck}
\label{sec:prelim_traj}
Given a timestep schedule $0 = t_0 < t_1 < \cdots < t_T = 1$, we construct a Markovian forward trajectory that satisfies the marginal constraint in \cref{eq:forward} at every step.
For any two consecutive timesteps $s < t$ in the schedule, the forward transition is:
\begin{equation}
\label{eq:markov_transition}
    \vx_t = \alpha_{s,t} \, \vx_s + \sigma_{s,t} \, \bm{\epsilon}, \quad \alpha_{s,t} = \frac{1 - t}{1 - s}, \quad \sigma_{s,t} = \sqrt{t^2 - \alpha_{s,t}^2 \, s^2},
\end{equation}
where $\bm{\epsilon} \sim \mathcal{N}(\mathbf{0}, \mI)$.
Applying this transition sequentially yields a correlated stochastic path $(\vx_{t_0}, \vx_{t_1}, \ldots, \vx_{t_T})$ from near-clean to near-noise, with each point marginally distributed as $q(\vx_t \mid \vx_0)$.
The Markovian structure defines a tractable joint distribution over the trajectory whose reverse conditionals $q(\vx_s \mid \vx_t, \vx_0)$ are Gaussian with known mean and variance.

\paragraph{The Gaussian approximation.}
\label{sec:prelim_limitation}
Standard diffusion and flow-matching models approximate the reverse conditional $p(\vx_s \mid \vx_t)$ with a single Gaussian $\mathcal{N}(\bm{\mu}_\theta(\vx_t), \sigma^2 \mI)$.
This is exact for the posterior conditioned on the clean image, $p(\vx_s \mid \vx_t, \vx_0)$, which is Gaussian by construction of the Markovian forward process.
However, the marginal reverse conditional integrates over all possible clean images:
\begin{equation}
\label{eq:marginal_reverse}
    p(\vx_s \mid \vx_t) = \int p(\vx_s \mid \vx_t, \vx_0)\, p(\vx_0 \mid \vx_t)\, \mathrm{d}\vx_0.
\end{equation}
Since $p(\vx_0 \mid \vx_t)$ is complex and potentially multimodal over natural images, the marginal $p(\vx_s \mid \vx_t)$ is a mixture of Gaussians that a single Gaussian cannot capture.
When the number of steps is small, each transition spans a large interval and the approximation error becomes severe.

\subsection{Normalizing Flows}
\label{sec:prelim_nf}

Normalizing flows~\citep{dinh2014nice,rezende2015variational,dinh2016density,kingma2018glow} learn an invertible mapping $f_\theta: \mathbb{R}^D \to \mathbb{R}^D$ between data $\vx$ and a latent $\vz = f_\theta(\vx)$ drawn from a simple prior $p_0(\vz) = \mathcal{N}(\mathbf{0}, \mI)$.
The exact log-likelihood is given by the change-of-variables formula:
\begin{equation}
\label{eq:nf_nll}
    \log p(\vx) = \log p_0\!\big(f_\theta(\vx)\big) + \log \big|\det J_{f_\theta}(\vx)\big|.
\end{equation}
A common design is the autoregressive flow~\citep{kingma2016improved,papamakarios2017masked}, which transforms each element conditioned on all preceding elements via affine (NVP) coupling~\citep{dinh2016density}, yielding a tractable triangular Jacobian.
TarFlow~\citep{zhainormalizing} parameterizes the affine coupling with a causal Transformer: each spatial token $\vx_n$ is transformed conditioned on all preceding tokens $\vx_{<n}$ via a self-exclusive causal mask:
\begin{equation}
\label{eq:nvp}
    \vz_n = \frac{\vx_n - \bm{\mu}_\theta(\vx_{<n})}{\bm{\sigma}_\theta(\vx_{<n})}, \quad \log\big|\det J\big| = -\sum_n \log \bm{\sigma}_\theta^{(n)},
\end{equation}
where $\bm{\sigma}_\theta > 0$ (scale) and $\bm{\mu}_\theta$ (shift) are predicted from preceding tokens.
This allows normalizing flows to scale competitively for high-resolution image generation.
STARFlow~\citep{gu2025starflow} further introduces a \emph{deep-shallow} architecture: a single deep autoregressive flow block with many Transformer layers captures most of the model capacity, followed by a few lightweight shallow blocks with alternating scan directions (e.g., left-to-right and right-to-left) that refine spatial details.
This deep-shallow design, extended to video in STARFlow-V~\citep{gu2025starflowv}, forms the architectural foundation of \methodname{}.

\section{\fullname{}}
\label{sec:method}

We present \fullname{} (\methodname{}), a generative framework that models the full conditional distribution $p(\vx_s \mid \vx_t)$ at each denoising step as a normalizing flow with exact log-likelihood (\cref{sec:formulation}).
\methodname{} can be trained from scratch (\cref{sec:from_scratch}), finetuned from pretrained diffusion or flow-matching models (\cref{sec:from_pretrained}), and accelerated to real-time generation via a learned denoiser (\cref{sec:denoiser}).
\begin{figure}[t]
    \centering
    \begin{tikzpicture}[
        scale=1, every node/.style={scale=1},
        enc/.style={rectangle, rounded corners=10pt, minimum width=1.75cm, minimum height=0.9cm,
            draw=blue!50!black, line width=1pt, fill=blue!6, font=\small\sffamily},
        pred/.style={ellipse, minimum width=2.6cm, minimum height=1.1cm,
            draw=green!50!black, line width=1.2pt, fill=green!4, font=\small\sffamily},
        dist/.style={rectangle, rounded corners=5pt, minimum width=2.2cm, minimum height=0.85cm,
            draw=red!50!black, line width=1pt, fill=red!6, font=\small\sffamily},
        jac/.style={rectangle, rounded corners=4pt, minimum width=1.2cm, minimum height=0.6cm,
            draw=blue!40!black, line width=0.7pt, fill=blue!4, font=\scriptsize},
        inpt/.style={circle, minimum size=0.7cm, draw=gray!60, line width=0.7pt, fill=gray!12, font=\normalsize, inner sep=1pt},
        znode/.style={circle, minimum size=0.7cm, draw=red!60!black, line width=0.9pt, fill=white, font=\normalsize, inner sep=1pt},
        arr/.style={-{Stealth[length=4pt,width=3.5pt]}, line width=0.7pt, blue!60!black},
        garr/.style={-{Stealth[length=4pt,width=3.5pt]}, line width=0.7pt, green!50!black},
        rarr/.style={-{Stealth[length=4pt,width=3.5pt]}, line width=0.7pt, red!50!black},
    ]
    \node[inpt] (xt) at (0,0) {$\vx_t$};
    \node[inpt] (xs) at (6,0) {$\vx_s$};

    \node[enc] (el) at (0,1.3) {$f_\mathcal{T}(\vx_t, t)$};
    \node[enc] (er) at (6,1.3) {$f_\mathcal{T}(\vx_s, s)$};

    \node[jac] (jl) at (-1.9,1.3) {$-\!\log|J|$};
    \node[jac] (jr) at (7.9,1.3) {$-\!\log|J|$};

    \node[znode] (z) at (2.4,2.0) {$\vz$};
    \node[font=\scriptsize, right=0.0cm of z, text=red!50!black] {$\sim\!\mathcal{N}(\mathbf{0},\bm{I})$};

    \node[pred] (pred) at (2.4,3.4) {$f_\mathcal{P}(\vu_t, \vz, \vy)$};

    \node[inpt] (y) at (-1, 3.3) {$\vy$};

    \node[dist] (D) at (6,3.4) {$D(\vu_s,\, \hat{\vu}_s)$};

    \draw[arr] (xt) -- (el);
    \draw[arr] (xs) -- (er);

    \draw[arr] (el.west) -- (jl.east);
    \draw[arr] (er.east) -- (jr.west);

    \draw[arr] (el.north) -- (0,2.7) node[left, font=\small]{$\vu_t$} to[out=90,in=190] (pred.west);

    \draw[rarr] (z) -- (pred);

    \draw[arr] (y.north east) to[out=30,in=160] (pred.north west);

    \draw[garr] (pred.east) -- node[above,font=\scriptsize]{$\hat{\vu}_s$} (D.west);

    \draw[arr] (er.north) -- (6,2.7) node[right, font=\small]{$\vu_s$} -- (D.south);

    \draw[dashed, gray!40, line width=0.5pt] (el.east) -- node[below,font=\tiny\sffamily\color{gray}]{shared} (er.west);

    \end{tikzpicture}
\caption{\textbf{NTM overview.}
    Shared transporter $f_\mathcal{T}$ maps $\vx_t, \vx_s$ to representations $\vu_t, \vu_s$ with a tractable Jacobian.
    The predictor $f_\mathcal{P}$ takes $\vu_t$ and latent $\vz \sim \mathcal{N}(\mathbf{0}, \bm{I})$ to produce $\hat{\vu}_s$.
    $D$ measures the distance between the prediction and the target at distribution level.}
    \label{fig:overview}
\end{figure}

\subsection{Model Formulation}
\label{sec:formulation}

As discussed in \cref{sec:prelim_limitation}, modeling $p(\vx_s \mid \vx_t)$ with a Gaussian formulation is fundamentally limited: the true reverse conditional is generally non-Gaussian because it marginalizes over all clean images consistent with $\vx_t$.
We seek a more expressive family that provides exact likelihood for stable training, while remaining structurally close to the diffusion framework to preserve its scalability.

\methodname{} models $p(\vx_s \mid \vx_t)$ by learning to predict in a latent space where the conditional distribution is simple enough to be modeled by Gaussian.
As shown in \cref{fig:overview}, a shared transporter $f_\mathcal{T}$ maps both $\vx_s$ and $\vx_t$ to a latent u-space, and a stochastic predictor $f_\mathcal{P}$ generates $\hat{\vu}_s$ from the noisier representation $\vu_t$ and a latent variable $\vz \sim \mathcal{N}(\mathbf{0}, \mI)$, optionally conditioned on $\vy$ (e.g., text or class label).
\begin{equation}
\label{eq:ntm_coupling}
    \hat{\vu}_s = f_\mathcal{P}(\vu_t, \vz, \vy), \quad \vu_s = f_\mathcal{T}(\vx_s, s), \quad \vu_t = f_\mathcal{T}(\vx_t, t).
\end{equation}
The general training objective minimizes a distributional distance $D$ between the prediction and the target, regularized by $R(f_\mathcal{T})$ to prevent representation collapse~\citep{grill2020bootstrap}:
\begin{equation}
\label{eq:ntm_general}
    \mathcal{L} = \mathbb{E}_{\vz}\big[D(\vu_s,\, \hat{\vu}_s)\big] + R(f_\mathcal{T}).
\end{equation}
Such objectives are common in self-supervised representation learning~\citep{grill2020bootstrap,caron2021emerging,bardes2022vicreg,assran2023self}, but are generally difficult to cast within a probabilistic framework for generative modeling.
The key insight of \methodname{} is that making $f_\mathcal{T}$ an \emph{invertible}, same-dimensional transporter---rather than a compressive encoder---turns this representation-learning objective into exact log-likelihood optimization via the change-of-variables formula.

Specifically, we implement $f_\mathcal{T}$ as a stack of TarFlow blocks~\citep{zhainormalizing,gu2025starflow} with spatial NVP coupling (\cref{eq:nvp}), and $f_\mathcal{P}$ as an affine map $\hat{\vu}_s = \bm{\mu}_\mathcal{P}(\vu_t, t, s, \vy) + \bm{\sigma}_\mathcal{P}(\vu_t, t, s, \vy) \cdot \vz$, which defines $p_\mathcal{P}(\vu_s \mid \vu_t, \vy) = \mathcal{N}(\bm{\mu}_\mathcal{P},\, \mathrm{diag}(\bm{\sigma}_\mathcal{P}^2))$.
Under these choices, setting $D = -\log p_\mathcal{P}$ and $R = -\log|\det J_{f_\mathcal{T}}|$ recovers the exact negative log-likelihood of $p(\vx_s \mid \vx_t)$:
\begin{equation}
\label{eq:ntm_nll}
    \mathcal{L}_{\text{\methodname{}}} = -\log p(\vx_s \mid \vx_t) = -\log p_\mathcal{P}(\vu_s \mid \vu_t) - \log\big|\det J_{f_\mathcal{T}}\big|.
\end{equation}
The composed mapping $\vx_s \xleftrightarrow{f_\mathcal{T}} \vu_s \xleftrightarrow{f_\mathcal{P}} \vz$ forms a \textbf{normalizing flow} from $\vx_s$ to $\vz \sim \mathcal{N}(\mathbf{0}, \mI)$. 
By expanding over a trajectory of $T$ steps, the \methodname{} loss can be simplified as:
\begin{equation}
\label{eq:ntm_loss}
    \mathcal{L}_{\text{\methodname{}}} = \sum_{k=1}^{T} \Big[\tfrac{1}{2}\|\vz_k\|^2 + \sum_n \Big(\log \bm{\sigma}_\mathcal{P}^{(k,n)} + \sum_{\ell} \log \bm{\sigma}_\mathcal{T}^{(k,\ell,n)}\Big) \Big],
\end{equation}
where $\bm{\sigma}_\mathcal{P}^{(k,n)}$ is the predictor scale at step $k$ and position $n$, and $\bm{\sigma}_\mathcal{T}^{(k,\ell,n)}$ is the scale from transporter block $\ell$.
This is the exact negative log-likelihood of the trajectory and training minimizes it end-to-end.

\subsection{Training from Scratch}
\label{sec:from_scratch}


\paragraph{Architecture.}
\methodname{} adopts the deep-shallow architecture of STARFlow~\citep{gu2025starflow,gu2025starflowv}, with a key modification to the deep block.
The \emph{predictor} ($f_\mathcal{P}$) is a deep Transformer that replaces STARFlow's spatial autoregressive flow with a non-causal full-attention coupling layer operating over the \emph{trajectory} dimension.
It predicts $\bm{\mu}_\mathcal{P}(\vu_t, t, s, \vy)$ and $\bm{\sigma}_\mathcal{P}(\vu_t, t, s, \vy)$ for each denoising step.
Despite its depth, the predictor processes all spatial positions in parallel, making it efficient at inference.
The \emph{transporter} ($f_\mathcal{T}$) consists of a few shallow TarFlow-style~\citep{zhainormalizing} causal autoregressive flow blocks with alternating scan directions.
Although autoregressive by nature, each transporter block is lightweight and operates locally within a single denoising step without information leakage across timestep.

\paragraph{Training.}
Given a $T$-step schedule $t_\mathrm{min} = t_0 < t_1 < \cdots < t_T = 1$, we model the joint trajectory distribution as:
\begin{equation*}
\label{eq:traj_factor}
    p(\vx_{t_T}, \ldots, \vx_{t_0}) = p(\vx_{t_T}) \prod_{k=1}^{T} p(\vx_{t_{k-1}} \mid \vx_{t_k}).
\end{equation*}
Since $t_T = 1$ is pure noise, we fix $p(\vx_{t_T}) = \mathcal{N}(\mathbf{0}, \mI)$ and skip both $f_\mathcal{T}$ and $f_\mathcal{P}$ at this level, so the model only learns the conditional factors $p(\vx_s \mid \vx_t)$.
Given clean data $\vx_0$, we construct a stochastic forward trajectory via \cref{eq:markov_transition} and train with either:
\begin{itemize}[leftmargin=*,nosep]
    \item \emph{End-to-end}: compute the \methodname{} loss (\cref{eq:ntm_loss}) over all $T$ conditional factors in the trajectory.
    \item \emph{Pair-wise}: randomly sample a single consecutive pair $(t, s)$ with $s < t$ per batch element.
\end{itemize}
In both modes, each batch element independently samples $T$ from a predefined set (e.g., $\{4, 8, 16\}$), enabling a single model to generate with different step counts without retraining.
For such cases, $f_\mathcal{T}$ takes $T$ as an additional input to adapt to the local timestep spacing.

\paragraph{Sampling.}
Given a schedule $t_\mathrm{min} = s_0 < s_1 < \cdots < s_T \approx 1$, sampling proceeds from noise to data by inverting \cref{eq:ntm_coupling}: the predictor runs sequentially over $T$ steps, drawing $\vz \sim \mathcal{N}(\mathbf{0}, \mI)$ and computing $\hat{\vu}_s = \bm{\mu}_\mathcal{P}(\hat{\vu}_t, t, s) + \bm{\sigma}_\mathcal{P}(\hat{\vu}_t, t, s) \cdot \vz$ at each step, where each output feeds into the next.
After all $T$ predictor steps, the transporter inverts the spatial mapping $\hat{\vx}_0 = f_\mathcal{T}^{-1}(\hat{\vu}_0)$ via sequential AR decoding to produce the final sample in x-space.
Classifier-free guidance~\citep{ho2022classifier} is applied by interpolating the predictor's conditional and unconditional outputs~\citep{gu2025starflow}.

\paragraph{Trajectory Score Denoising.}
Normalizing flows require data to be dense for likelihood training, while natural images often lie on low-dimensional manifolds; TarFlow addresses this by adding a small noise and applying score-based denoising at test time~\citep{zhainormalizing,gu2025starflow}.
In \methodname{}, this extends naturally: the generated trajectory $\hat{\vx} = (\hat{\vx}_{t_0}, \ldots, \hat{\vx}_{t_T})$ is inherently a noisy sequence from the Markov forward process, requiring no additional noise injection.
However, unlike independent per-sample denoising, the trajectory elements are correlated across timesteps.
The \methodname{} loss provides $-\log p(\hat{\vx})$, whose gradient gives the \emph{joint} score of the full trajectory distribution.
We exploit this to perform trajectory-level denoising:
\begin{equation}
\label{eq:refinement}
    \hat{\vx}^{\mathrm{den}} = \frac{1}{1 - \vt}\left(\hat{\vx} - \bm{S} \cdot \nabla_{\hat{\vx}} \mathcal{L}_{\text{\methodname{}}}\right),
\end{equation}
where $\bm{S}$ is the covariance matrix of the trajectory under the pre-defined forward process (\Cref{eq:markov_transition}), with $[\bm{S}]_{ij} = \min(t_i, t_j)^2 (1 - \max(t_i, t_j)) / (1 - \min(t_i, t_j))$, and division by $(1{-}\vt)$ maps from the noisy domain to the clean domain.
The final output is taken at $t_\mathrm{min}$.

\subsection{Finetuning from Pretrained Models}
\label{sec:from_pretrained}

\begin{wrapfigure}{r}{0.42\textwidth}
\vspace{-12pt}
\centering
\begin{tikzpicture}[
    scale=1, every node/.style={scale=1},
    enc/.style={rectangle, rounded corners=4pt, minimum width=1.3cm, minimum height=0.65cm,
        draw=blue!50!black, line width=0.8pt, fill=blue!6, font=\small\sffamily},
    pred/.style={ellipse, minimum width=1.6cm, minimum height=0.7cm,
        draw=green!50!black, line width=0.8pt, fill=green!4, font=\small\sffamily},
    dist/.style={rectangle, rounded corners=4pt, minimum width=1.5cm, minimum height=0.55cm,
        draw=red!50!black, line width=0.8pt, fill=red!6, font=\scriptsize\sffamily},
    inpt/.style={circle, minimum size=0.6cm, draw=gray!60, line width=0.6pt, fill=gray!12, font=\small, inner sep=1pt},
    arr/.style={-{Stealth[length=4pt,width=3.5pt]}, line width=0.6pt, gray!80!black},
]
    \node[inpt] (xt) at (1.8, 0) {$\vx_t$};

    \node[pred, draw=gray!50, fill=gray!5] (fm) at (0, 1.5) {$f_\text{FM}$};
    \node[font=\small, blue] at (0.4, 1.7) {\ding{100}};
    \node[font=\small] (mufm) at (0, 2.7) {$\bm{\mu}_\text{FM}$};

    \node[enc] (g) at (2.5, 1.5) {$f_\mathcal{T}$};
    \node[font=\small] (ut) at (2.5, 2.3) {$\vu_t$};
    \node[pred] (fp) at (2.5, 3.1) {$f_\mathcal{P}$};
    \node[font=\small] (mu) at (1.5, 3.9) {$\bm{\mu}_\mathcal{P}$};
    \node[font=\small] (sig) at (3.5, 3.9) {$\bm{\sigma}_\mathcal{P}$};

    \node[dist] (laux) at (0, 4.7) {$\mathcal{L}_\text{aux}$};
    \node[dist, draw=orange!60!black, fill=orange!6] (lnf) at (3.5, 4.7) {$\mathcal{L}_\text{NTM}$};

    \draw[arr] (xt.north) -- ++(0,0.15) -| (fm.south);
    \draw[arr] (xt.north) -- ++(0,0.15) -| (g.south);

    \draw[arr] (g) -- (ut);
    \draw[arr] (ut) -- (fp);
    \draw[arr] (fp) -- (mu);
    \draw[arr] (fp) -- (sig);

    \draw[arr] (fm) -- (mufm);

    \draw[arr] (mu) -- (laux);
    \draw[arr] (mufm) -- (laux);

    \draw[arr] (mu) -- (lnf);
    \draw[arr] (sig) -- (lnf);

\end{tikzpicture}
\vspace{-6pt}
\caption{Finetuning: $\mathcal{L}_\text{aux}$ aligns $\bm{\mu}_\mathcal{P}$ with frozen $\bm{\mu}_\text{FM}$; $\mathcal{L}_\text{NTM}$ trains the full model.}
\label{fig:aux_loss}
\end{wrapfigure}

\methodname{} can also be initialized from a pretrained flow matching or diffusion models.
Taking flow matching as an example, the pretrained backbone is trained to predict the velocity field in x-space given noisy input $\vx_t$ and timestep $t$.
Here, we reinterpret the prediction $\vv$ and hidden states $\vh$ from the input $(\vu_t, t)$ in u-space.
We can readily compute a predicted clean sample $\hat{\vu}_0 = \vu_t - t \cdot \vv$ and derive the denoising posterior $\mathcal{N}(\bm{\mu}_\mathrm{post}, \sigma_\mathrm{post}^2 \mI)$ for the transition from $t$ to $s$:
\begin{equation}
\label{eq:posterior}
    \bm{\mu}_\mathrm{post} = A(t,s) \, \vu_t + B(t,s) \, \hat{\vu}_0, \qquad
    \sigma_\mathrm{post} = C(t,s),
\end{equation}
where $A$, $B$, and $C$ are closed-form coefficients derived from the true reverse posterior of the Markovian forward process (\cref{sec:prelim_traj}; full derivation in \cref{sec:app_posterior}).
We initialize the predictor to match this posterior: $\bm{\mu}_\mathcal{P} = \bm{\mu}_\mathrm{post}$, and learn a multiplicative scale correction via a zero-initialized projection:
\begin{equation}
\label{eq:residual_coupling}
    \bm{\mu}_\mathcal{P} = \bm{\mu}_\mathrm{post}, \qquad
    \bm{\sigma}_\mathcal{P} = \sigma_\mathrm{post} \cdot \exp(\bm{\delta}_\sigma), \quad \bm{\delta}_\sigma = \mathrm{proj}_\mathrm{out}(\vh),
\end{equation}
where $\mathrm{proj}_\mathrm{out}$ is initialized to zero so that $\bm{\sigma}_\mathcal{P} = \sigma_\mathrm{post}$ at initialization.
By further initializing the transporter as identity ($f_\mathcal{T} = \mathrm{id}$), the full model starts as the pretrained Gaussian posterior in x-space.
As training progresses, the NLL objective drives $f_\mathcal{T}$ to drift from identity and $\bm{\delta}_\sigma$ to depart from zero, jointly learning the non-Gaussian structure of $p(\vx_s \mid \vx_t)$.

\paragraph{Mean-alignment auxiliary loss.}
To prevent early divergence from the pretrained solution, we add an auxiliary loss that aligns \methodname{}'s learned shift $\bm{\mu}_\mathcal{P}$ with the denoising mean $\bm{\mu}_\mathrm{FM}$ produced by a frozen copy of the pretrained backbone predicting directly from x-space:
\begin{equation}
\label{eq:fm_aux}
    \mathcal{L}_\mathrm{aux} = \big\| \bm{\mu}_\mathcal{P} - \bm{\mu}_\mathrm{FM} \big\|^2.
\end{equation}
The total loss is $\mathcal{L} = \mathcal{L}_{\text{\methodname{}}} + \lambda  \; \mathcal{L}_\mathrm{aux}$, where $\lambda$ can be annealed during training.
This auxiliary loss serves three purposes:
(1) it encourages the model to remain close to the pretrained diffusion solution, preventing catastrophic drift;
(2) $\bm{\mu}_\mathrm{FM}$ itself defines a meaningful u-space---since it is a neural prediction of the next-step mean directly from $\vx_t$, it is smooth and predictable, and $\mathcal{L}_\mathrm{aux}$ ensures the transporter learns to connect these per-step predictions into a coherent trajectory;
(3) because the transporter and predictor can move jointly, the model can optimize the NF loss without drifting from the pretrained quality.

\subsection{Fast Generation via Learned Denoiser}
\label{sec:denoiser}
\begin{figure*}[t]
    \centering
    \begin{tikzpicture}[
        scale=0.8, every node/.style={scale=0.8},
        enc/.style={rectangle, rounded corners=4pt, minimum width=1.3cm, minimum height=0.7cm,
            draw=blue!50!black, line width=0.8pt, fill=blue!6, font=\small\sffamily},
        pred/.style={ellipse, minimum width=1.6cm, minimum height=0.7cm,
            draw=green!50!black, line width=0.8pt, fill=green!4, font=\small\sffamily},
        den/.style={ellipse, minimum width=1.6cm, minimum height=0.7cm,
            draw=teal!70!black, line width=1pt, fill=teal!10, font=\small\sffamily},
        arr/.style={{Stealth[length=4pt,width=3.5pt]}-{Stealth[length=4pt,width=3.5pt]}, line width=0.6pt, gray!80!black},
        uparr/.style={-{Stealth[length=4pt,width=3.5pt]}, line width=0.6pt, gray!80!black},
        darr/.style={-{Stealth[length=5pt,width=4pt]}, line width=0.8pt, olive!80!black},
        gradarr/.style={-{Stealth[length=4pt,width=3.5pt]}, line width=0.7pt, orange!70!black, dashed},
    ]

    \def\sep{2.2}

    \fill[red!4, rounded corners=6pt] (-0.9, -1.1) rectangle (3.5*2.2 + 0.9, 3.7);
    \draw[orange!70!black, dashed, line width=0.8pt, rounded corners=6pt]
        (-0.9, -1.1) rectangle (3.5*2.2 + 0.9, 3.7);

    \node[font=\small, orange!70!black] (xdT) at (0, -2.0) {$\vx^{\text{den}}_{t_T}$};
    \node[font=\small, orange!70!black] (xdT1) at (\sep, -2.0) {$\vx^{\text{den}}_{t_{T\!-\!1}}$};
    \node[font=\small, orange!70!black] (xd2) at (2.5*\sep, -2.0) {$\vx^{\text{den}}_{t_1}$};
    \node[font=\small, orange!70!black] (xd1) at (3.5*\sep, -2.0) {$\vx^{\text{den}}_{t_0}$};

    \node[font=\small] (xT) at (0, -0.7) {$\vx_{t_T}$};
    \node[font=\small] (xT1) at (\sep, -0.7) {$\vx_{t_{T\!-\!1}}$};
    \node[font=\small] (dots_x) at (1.75*\sep, -0.7) {$\cdots$};
    \node[font=\small] (x2) at (2.5*\sep, -0.7) {$\vx_{t_1}$};
    \node[font=\small] (x1) at (3.5*\sep, -0.7) {$\vx_{t_0}$};

    \node[font=\small] (uT) at (0, 1.2) {$\vu_{t_T}$};
    \node[font=\small] (zT) at (0, 3.2) {$\vz_{t_T}$};
    \draw[arr] (xT) -- node[right, font=\tiny, gray]{$=$} (uT);
    \draw[arr] (uT) -- node[right, font=\tiny, gray]{$=$} (zT);

    \node[enc] (gT1) at (\sep, 0.3) {$f_\mathcal{T}$};
    \node[enc] (g2) at (2.5*\sep, 0.3) {$f_\mathcal{T}$};
    \node[enc] (g1) at (3.5*\sep, 0.3) {$f_\mathcal{T}$};

    \node[font=\small] (uT1) at (\sep, 1.2) {$\vu_{t_{T\!-\!1}}$};
    \node[font=\small] (u2) at (2.5*\sep, 1.2) {$\vu_{t_1}$};
    \node[font=\small] (u1) at (3.5*\sep, 1.2) {$\vu_{t_0}$};

    \node[pred] (fT1) at (\sep, 2.2) {$f_\mathcal{P}$};
    \node[font=\small] (dots_f) at (1.75*\sep, 2.2) {$\cdots$};
    \node[pred] (f2) at (2.5*\sep, 2.2) {$f_\mathcal{P}$};
    \node[pred] (f1) at (3.5*\sep, 2.2) {$f_\mathcal{P}$};

    \node[font=\small] (zT1) at (\sep, 3.2) {$\vz_{t_{T\!-\!1}}$};
    \node[font=\small] (z2) at (2.5*\sep, 3.2) {$\vz_{t_1}$};
    \node[font=\small] (z1) at (3.5*\sep, 3.2) {$\vz_{t_0}$};

    \node[den] (phi) at (4.7*\sep, 2.2) {$g_\phi$};

    \node[rectangle, rounded corners=5pt, minimum width=1.5cm, minimum height=0.5cm,
        draw=red!50!black, line width=1pt, fill=red!6, font=\small\sffamily]
        (loss) at (4.7*\sep, -2.0) {$\mathcal{L}_\text{den}$};

    \draw[arr] (xT1) -- (gT1);
    \draw[arr] (x2) -- (g2);
    \draw[arr] (x1) -- (g1);

    \draw[arr] (gT1) -- (uT1);
    \draw[arr] (g2) -- (u2);
    \draw[arr] (g1) -- (u1);

    \draw[arr] (uT1) -- (fT1);
    \draw[arr] (u2) -- (f2);
    \draw[arr] (u1) -- (f1);

    \draw[arr] (fT1) -- (zT1);
    \draw[arr] (f2) -- (z2);
    \draw[arr] (f1) -- (z1);

    \draw[uparr] (uT.east) to[out=0,in=180] (fT1.west);
    \draw[uparr] (u2.east) to[out=0,in=180] (f1.west);

    \draw[darr] (u1.east) to[out=0,in=180] (phi.west);
    \draw[darr] (phi.south) -- (loss.north);

    \draw[orange!70!black, -{Stealth[length=5pt,width=4pt]}, line width=0.7pt]
        (xd1.east) -- (loss.west);

    \draw[gradarr] (xT.south) -- (xdT.north);
    \draw[gradarr] (xT1.south) -- (xdT1.north);
    \draw[gradarr] (x2.south) -- (xd2.north);
    \draw[gradarr] (x1.south) -- (xd1.north);

    \node[font=\scriptsize, orange!70!black] at (1.75*\sep, -1.35) {$-\bm{S}\!\cdot\!\nabla_{\vx}\mathcal{L}_\text{NTM}$};

    \end{tikzpicture}
    \caption{\textbf{Denoiser training via trajectory score denoising.}
    The frozen NTM (dashed box) computes the trajectory NLL and its gradient refines every position via $\vx^{\text{den}} = \vx - \bm{\Sigma}\nabla_\vx \mathcal{L}_\text{NTM}$, producing denoised targets (orange).
    A denoiser $g_\phi$ learns to predict $\vx^{\text{den}}_{t_0}$ directly from the $\vu_{t_0}$.}
    \label{fig:denoiser}
    \vspace{-10pt}
\end{figure*}

Standard sampling from \methodname{} requires $T$ sequential predictor steps with AR decoding at each step, together with the trajectory score-based denoising (\cref{eq:refinement}) using backpropagation at test time.
Both of them, while acceptable due to the light-weight design, still introduce more latency than the predictor.
To eliminate this cost, we can optionally train a lightweight denoiser network $g_\phi$ that amortizes the self-refinement into a single forward pass, following a similar distillation paradigm of NFM~\citep{berthelot2026nfm} and STARFlow-V~\citep{gu2025starflowv}.
The denoiser is a Transformer with non-causal attention that takes the predictor's output $\vu_{t_0}$ at the cleanest level in u-space along with text embeddings $\vy$, and directly outputs a denoised image $\hat{\vx}_0^{\mathrm{den}}$.
Since we model a Markov trajectory and the designed invertibility, $\vu_{t_0}$ already contains all the information needed to \textbf{deterministically} predict the clean output.

The denoiser can be post-trained after the main model converges, using MSE against score-based denoising targets derived from the frozen \methodname{} model on real data trajectories (\Cref{eq:refinement}):
\begin{equation}
\label{eq:denoiser_loss}
    \mathcal{L}_\mathrm{den} = \big\| g_\phi(\vu_{t_0}, \vy) - \hat{\vx}_0^{\mathrm{den}} \big\|^2.
\end{equation}
At inference, the new pipeline becomes: (1) run the predictor over $T$ steps to produce $\vu_{t_0}$, (2) run $g_\phi$ in a single forward pass to obtain $\hat{\vx}_0$. This bypasses both the transporter AR decoding and the backprop-based denoising, producing high-quality images in as few as \textbf{four steps}.

\section{Experiments}
\label{sec:experiments}

\subsection{Setup}
\label{sec:exp_setup}

\paragraph{Implementation.}
All \methodname{} models are trained with AdamW in bfloat16 with FSDP on an internal text-image dataset of $\sim$70M pairs (including CC12M).
We consider two settings:
\begin{itemize}[leftmargin=*,nosep]
    \item \emph{From scratch}: class-conditional ImageNet and text-to-image generation at $256{\times}256$ resolution with the latent space of FAE~\citep{gao2025one} ($16\times$ spatial compression, 32-dim latents), using Qwen-2.5-VL as the text encoder.
    \item \emph{Finetuning}: initializing from a pretrained flow-matching backbone (FLUX.2-klein, 4B)\footnote{\url{https://huggingface.co/black-forest-labs/FLUX.2-klein-base-4B}} at $512{\times}512$ resolution with its native VAE latent space.
\end{itemize}

The transporter consists of 2 TarFlow-style blocks with 4 layers each and causal masks along alternating directions; the predictor is a 24-layer full-attention Transformer.
All models use $T{=}4$ denoising steps and 10\% CFG dropout.
For finetuning, we apply the residual parameterization (\cref{sec:from_pretrained}) with the auxiliary loss ($\lambda{=}2.5$, MSE variant).
Both settings use a batch size of $1024$ on 64 H100 GPUs.
Further details are in the Appendix.

\paragraph{Evaluation.}
We report compositional accuracy on GenEval~\citep{ghosh2023geneval} and DPG-Bench~\citep{hu2024ella} for text-to-image generation.
We additionally evaluate class-conditional generation on ImageNet 256$\times$256 for fair comparison when training \methodname{} from scratch (\cref{sec:app_imagenet}).






\begin{table}[t]
\centering
\small
\setlength{\tabcolsep}{4pt}
\caption{\textbf{T2I Evaluation.}
GenEval~\citep{ghosh2023geneval} overall score and DPG-Bench~\citep{hu2024ella} percentage. }
\label{tab:geneval_dpg}
\begin{tabular}{llcc}
\toprule
Type & Method & GenEval$\uparrow$ & DPG$\uparrow$ \\
\midrule
\multirow{9}{*}{DM} & SDXL~\citep{podell2023sdxl} & 0.55 & 74.65 \\
& PixArt-$\alpha$~\citep{chen2023pixartalpha} & 0.48 & 71.11 \\
& SD3-Medium~\citep{esser2024scaling} & 0.62 & 84.08 \\
& FLUX.1-dev~\citep{labs2024flux} & 0.66 & 83.84 \\
& Janus-Pro-7B~\citep{chen2025januspro} & 0.80 & 84.19 \\
& HiDream-I1-Full~\citep{hidream2025i1} & 0.83 & 85.89 \\
& Seedream 3.0~\citep{seedream2025} & 0.84 & 88.27 \\
& Qwen-Image~\citep{wu2025qwenimage} & 0.87 & 88.32 \\
& Nucleus-Image~\citep{akiti2026nucleus} & 0.87 & 88.79 \\
\midrule
\multirow{3}{*}{NF} & STARFlow~\citep{gu2025starflow} & 0.56 & -- \\
& \methodname{} (from scratch, $256\times 256$) & 0.82 & 79.64 \\
& \methodname{} (finetune, $512\times 512$) & 0.76 & 83.38 \\
\bottomrule
\end{tabular}

\end{table}

\subsection{From-Scratch Results}
\label{sec:results_scratch}

\paragraph{Text-to-image generation.}
\cref{tab:geneval_dpg} reports compositional accuracy on GenEval and DPG-Bench.
\methodname{} trained from scratch at $256{\times}256$ achieves \textbf{0.82} on GenEval and \textbf{79.64} on DPG-Bench with only 4 steps, significantly outperforming the prior normalizing flow model STARFlow~\citep{gu2025starflow} (0.56 GenEval, 256 autoregressive steps) and matching strong diffusion baselines that require substantially more sampling steps.

\paragraph{Class-conditional ImageNet.}
As a controlled comparison for the from-scratch setting, we evaluate on class-conditional ImageNet $256{\times}256$.
\methodname{} achieves 2.80 FID with 16 steps and 3.83 FID with 4 steps---comparable to STARFlow (FAE) at 2.67 FID which requires 256 autoregressive steps (\cref{sec:app_imagenet}).
These results use only the exact NLL training objective without any distribution-level losses (e.g., adversarial or perceptual), demonstrating that exact likelihood training alone produces competitive few-step generation.

\subsection{Finetuning Results}
\label{sec:results_finetune}

\paragraph{Text-to-image generation.}
The finetuned variant at $512{\times}512$ achieves 0.76 on GenEval and 83.38 on DPG-Bench (\cref{tab:geneval_dpg}), demonstrating that \methodname{} can scale to higher resolutions via pretrained initialization.
The position and attribute-binding sub-tasks remain challenging at this stage of finetuning, suggesting room for improvement with longer training or stronger pretrained backbones.

\begin{wraptable}{r}{0.45\textwidth}
\vspace{-1.2em}
\centering
\small
\caption{Score denoising vs.\ learned denoiser (finetuned setting).}
\label{tab:denoiser}
\begin{tabular}{lcc}
\toprule
Method & img/s $\uparrow$ & LPIPS $\downarrow$ \\
\midrule
Full NF + Traj. denoise & 0.20 & --- \\
Predictor + Denoiser & 1.88 & 0.121 \\
\bottomrule
\end{tabular}
\vspace{-1em}
\end{wraptable}

\paragraph{Score denoising vs.\ learned denoiser.}
\cref{tab:denoiser} compares two inference strategies for the finetuned model: (i)~transporter inversion followed by trajectory score denoising via \cref{eq:refinement}, and (ii)~the learned denoiser $g_\phi$ that amortizes the refinement into a single forward pass.
The denoiser achieves $\sim\textbf{9}\times$ speedup while maintaining high fidelity to the score-based refinement output (LPIPS $0.121$), confirming that a single forward pass can effectively replace iterative backpropagation-based denoising.

\subsection{Ablation Studies}
\label{sec:ablations}

\begin{figure*}[t]
    \centering
    \includegraphics[width=\linewidth]{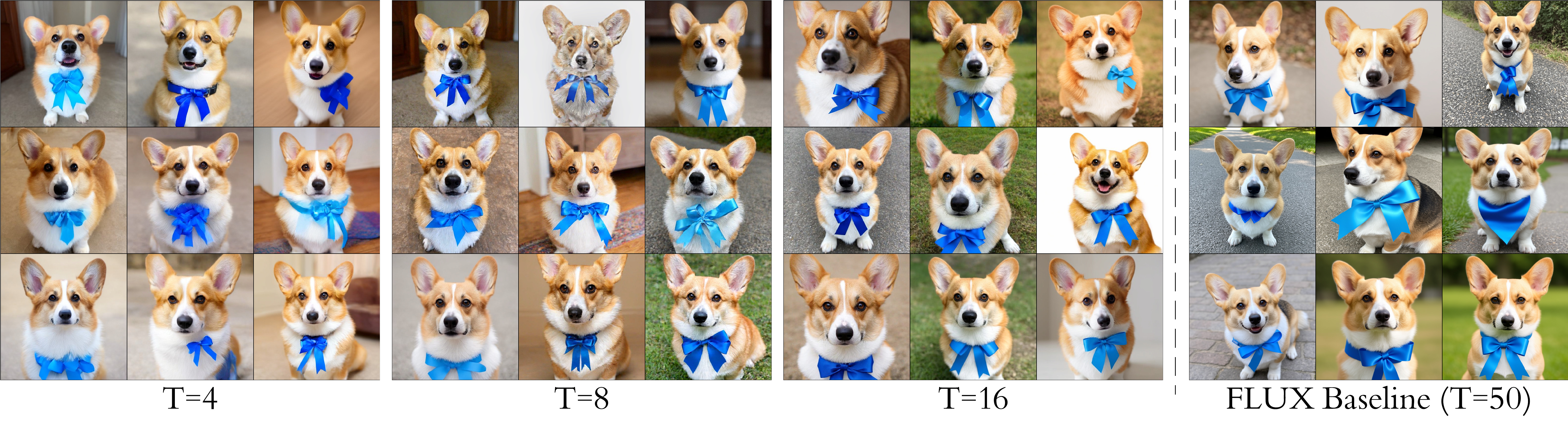}
    \caption{\textbf{Ablation: multi-trajectory training.}
    Comparison of the same \methodname{} evaluated with $T{=}4$, $T{=}8$, $T{=}16$ denoising steps and the baseline FLUX (50 steps).}
    \label{fig:ablation_traj}
\end{figure*}

\begin{figure}[t]
    \centering
    \begin{subfigure}[t]{0.24\linewidth}
        \centering
        \includegraphics[width=\linewidth]{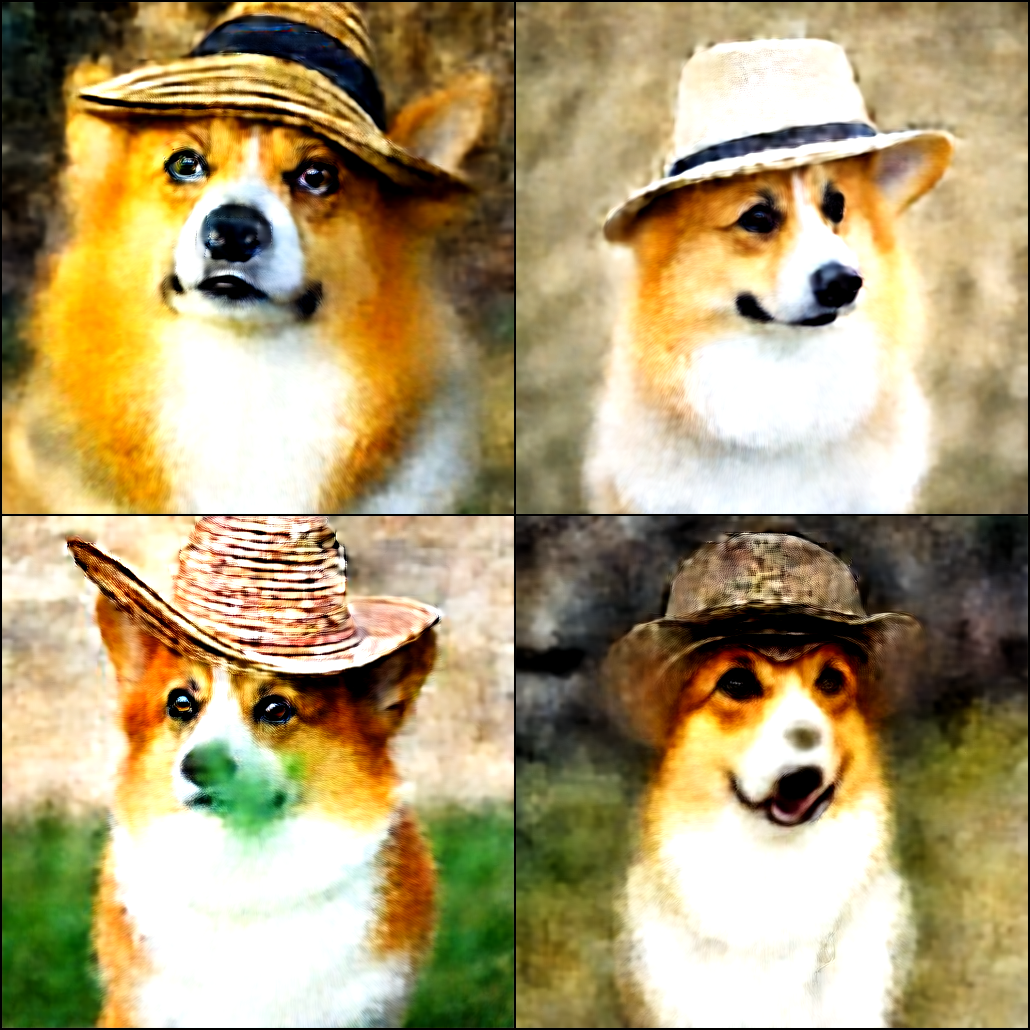}
        \caption{Without aux loss}
    \end{subfigure}
    \hfill
    \begin{subfigure}[t]{0.24\linewidth}
        \centering
        \includegraphics[width=\linewidth]{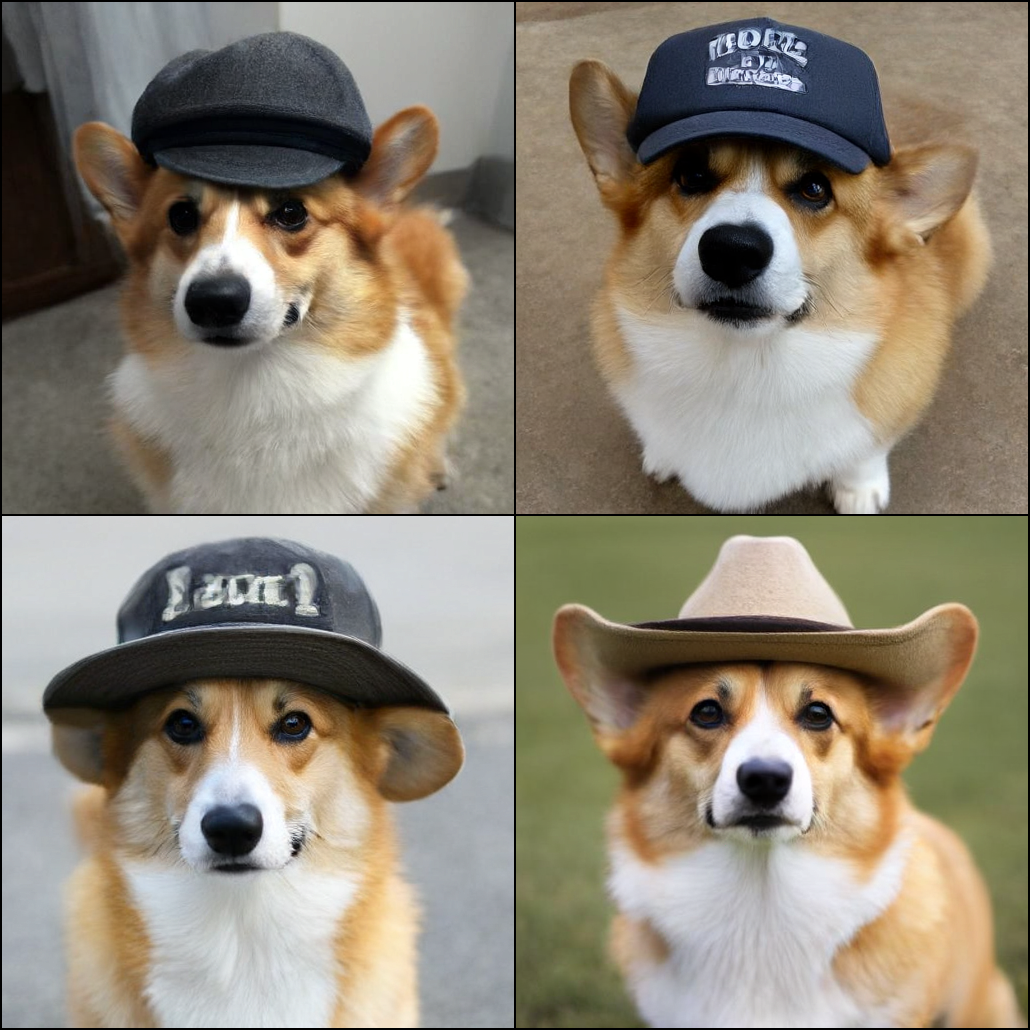}
        \caption{With aux loss}
    \end{subfigure}
    \hfill
    \begin{subfigure}[t]{0.48\linewidth}
        \centering
        \includegraphics[width=\linewidth]{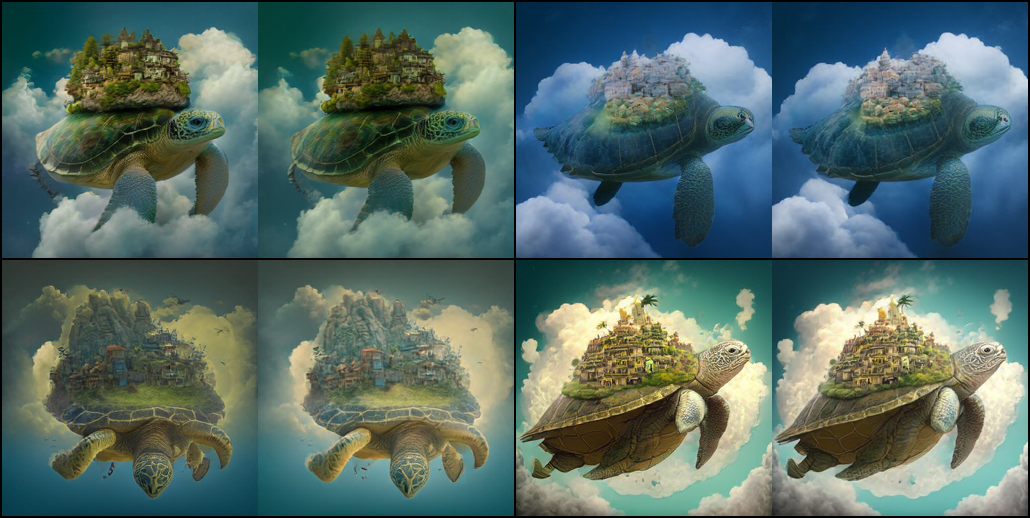}
        \caption{Trajectory score denoising \textit{vs.}\ learned denoiser}
    \end{subfigure}
    \caption{\textbf{Ablations.}
    (a) Finetuning directly with the NF loss diverges.
    (b) Adding the mean-alignment loss (\cref{eq:fm_aux}) stabilizes training.
    (c) Comparison of denoising approaches.}
    \label{fig:ablation_loss}
    \vspace{-10pt}
\end{figure}

We conduct ablation studies on text-to-image generation to analyze the key design choices of \methodname{}.

\paragraph{Multi-trajectory training (finetuned).}
\cref{fig:ablation_traj} compares finetuned models trained with different trajectory lengths $T \in \{4, 8, 16\}$ against the baseline FLUX (50 steps).
Longer trajectories provide finer-grained denoising steps, which can improve detail preservation at the cost of slower inference.
We find that $T{=}4$ provides the best quality--speed trade-off for the finetuning setting.

\paragraph{Effect of the transporter (from scratch).}
As shown in \cref{fig:trajectory_comparison}, reducing flow matching to 4 steps without a transporter produces severely blurry outputs.
The invertible mapping provides a latent space where the affine predictor becomes expressive, recovering 50-step quality in only 4 steps.

\paragraph{Auxiliary loss for finetuning.}
\cref{fig:ablation_loss} ablates the mean-alignment auxiliary loss (\cref{eq:fm_aux}).
Without the auxiliary loss ($\lambda = 0$), finetuning diverges early in training---the NLL objective alone provides insufficient signal to keep the predictor near the pretrained solution, causing catastrophic forgetting.
The auxiliary loss stabilizes training by anchoring the predictor mean to the pretrained velocity field.

\subsection{Qualitative Results}
\label{sec:qualitative}

\cref{fig:teaser} presents text-to-image samples from \methodname{} in 4 denoising steps across both settings.

\paragraph{From scratch ($256{\times}256$).}
The from-scratch model demonstrates strong compositional generalization across multi-object scenes, fine-grained attribute control, and varied artistic styles despite being trained at moderate resolution.

\paragraph{Finetuned ($512{\times}512$).}
The finetuned model preserves the visual quality and prompt adherence of the pretrained FLUX backbone (which requires 50 steps) while operating in only 4 steps, confirming that modeling the non-Gaussian reverse conditional recovers the information lost by naive step reduction.
Samples exhibit high-resolution detail, text rendering capability, and diverse artistic styles.

\begin{figure}[!t]
\centering
\includegraphics[width=\linewidth]{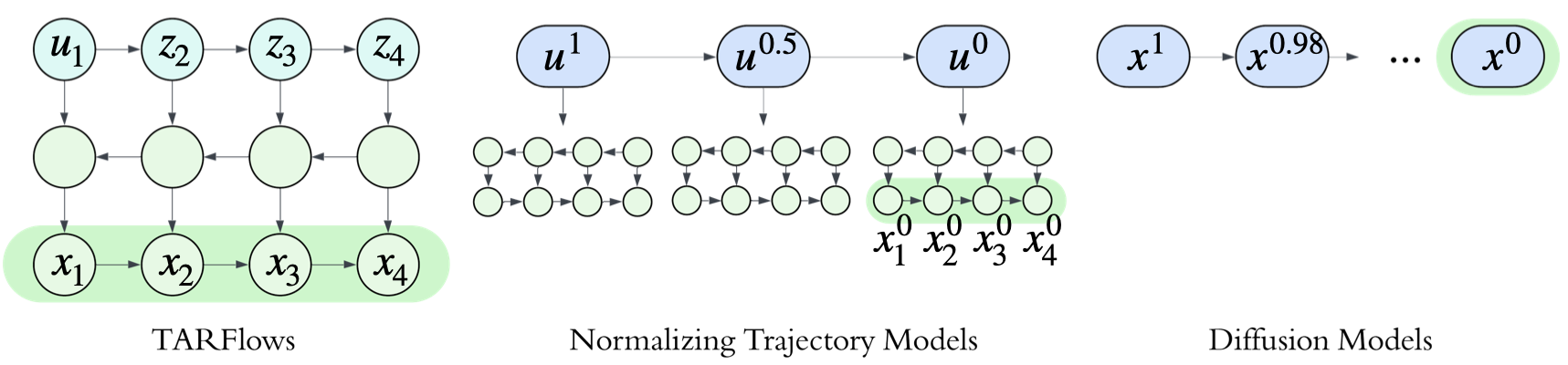}
\caption{\textbf{Comparison between TARFlows, Normalizing Trajectory Models, and Diffusion Models.}
Subscripts denote spatial patch indices, and superscripts denote trajectory timesteps, with $\vx^1, \vu^1$ representing pure Gaussian noise and $\vx^0, \vu^0$ the clean data. 
}
\label{fig:ntm_comp}
\end{figure}

\section{Discussion}
\label{sec:discussion}

\paragraph{NTM as an interpolation between normalizing flows and flow matching.}
STARFlow~\citep{gu2025starflow} directly models the marginal image distribution $p(\vx)$ by decomposing it via a deep spatial autoregressive flow within a single generation step---the entire generation is performed in one pass through many sequential AR blocks (e.g., 256 steps).
At the other extreme, flow matching models a velocity field whose ODE integration requires many small Gaussian steps for high quality.
\methodname{} occupies a middle ground: it explicitly models each intermediate conditional $p(\vx_s \mid \vx_t)$ along a $T$-step denoising trajectory as a normalizing flow, as shown in \cref{fig:ntm_comp}.

\begin{wrapfigure}{r}{0.3\linewidth}
\vspace{-20pt}
\centering
\includegraphics[width=\linewidth]{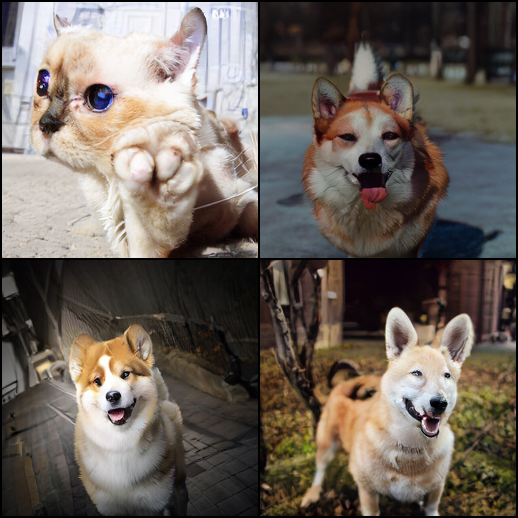}
\caption{\textbf{Failure Case:} \methodname{} with $T{=}1$ produces degraded outputs due to insufficient transporter capacity. Prompt: \textit{a corgi dog.}}
\vspace{-15pt}
\label{fig:failure_1step}
\end{wrapfigure}

The key architectural tradeoff is where to place depth.
STARFlow concentrates all capacity \emph{within} a single step via deep spatial AR blocks; \methodname{} distributes capacity \emph{across} multiple denoising steps, using a shallow transporter (2 blocks $\times$ 4 layers) at each step paired with a deep trajectory-level predictor.
This trades per-step expressiveness for multi-step structure: the predictor reasons across timestep levels while each transporter handles only the local non-Gaussian residual within one step.
As a result, the per-step normalizing flow in \methodname{} can be lightweight because each step only needs to capture the conditional $p(\vx_s \mid \vx_t)$---which is simpler than the full marginal $p(\vx)$---while the deep predictor captures the bulk of the denoising signal in u-space.

\paragraph{Why single-step generation remains challenging.}
As shown in \cref{fig:failure_1step}, \methodname{} with $T{=}1$ produces severely degraded outputs.
This failure is not a training issue but a fundamental capacity constraint.
At $T{=}1$, the entire non-Gaussian structure of the data distribution must be captured by the shallow transporter alone---the predictor reduces to a single-step Gaussian coupling.
This configuration is effectively a STARFlow-like architecture with a parallel (non-causal) prior, but with far fewer transporter layers than STARFlow's deep blocks (8 layers vs.\ STARFlow's 24+ layers per block $\times$ multiple blocks).
Making the transporter as deep as STARFlow would restore single-step quality but defeat the purpose of the few-step design, as inference would again be dominated by sequential AR decoding.

For finetuning, the $T{=}1$ setting introduces additional challenges: the mean-alignment auxiliary loss (\cref{eq:fm_aux}) was designed to anchor the predictor to a multi-step denoising trajectory, and collapsing to a single step fundamentally changes the training dynamics.

\paragraph{Implications.}
\methodname{}'s sweet spot is $T{=}4$--$8$: enough steps for the shallow transporter to distribute the non-Gaussian modeling across the trajectory, while the deep predictor handles cross-timestep reasoning efficiently in parallel.
The architecture naturally admits a spectrum---deeper transporters with fewer steps, or shallower transporters with more steps---offering a principled way to trade off sequential computation for generation quality.
Pushing toward single-step generation with exact likelihood remains an open challenge that may require fundamentally different architectural choices, such as adaptive-depth transporters or progressive capacity allocation across the trajectory.

\section{Related Work}
\label{sec:related}

\paragraph{Normalizing flows for image generation.}
Normalizing flows~\citep{dinh2014nice,dinh2016density,rezende2015variational,kingma2018glow} learn invertible mappings with exact log-likelihood via the change-of-variables formula.
Classical approaches struggled to scale to high-resolution images due to the full-dimensional invertibility constraint.
TarFlow~\citep{zhainormalizing} addressed this by parameterizing autoregressive coupling layers with causal Transformers, enabling flows to leverage modern sequence-modeling architectures.
STARFlow~\citep{gu2025starflow,gu2025starflowv} further introduced a deep-shallow design---a single deep autoregressive block followed by lightweight shallow blocks---scaling normalizing flows to competitive text-to-image generation.
While these methods model the marginal $p(\vx)$ directly, \methodname{} applies normalizing flows to the \emph{conditional} distribution $p(\vx_s \mid \vx_t)$ at each denoising step.
Since conditioning on $\vx_t$ already constrains the space of plausible images, the per-step flow is simpler than the full marginal and requires fewer blocks.

\paragraph{Non-Gaussian reverse processes.}
The Gaussian assumption in diffusion reverse steps has been challenged by several works.
DDGAN~\citep{xiao2022ddgan} trains a GAN discriminator at each denoising step, enabling larger step sizes by modeling an implicit non-Gaussian conditional.
However, GAN-based approaches provide no tractable density, suffer from mode-seeking behavior, and are difficult to scale.
Diffusion Normalizing Flow~\citep{zhang2021diffusion} combines normalizing flows with diffusion via neural SDEs, but models the entire generation trajectory as a single continuous flow rather than learning an expressive per-step reverse conditional.
Concurrent work~\citep{chen2026nfid} also explores normalizing flows with iterative denoising using a different architectural design.
\methodname{} models the non-Gaussian reverse via normalizing flows with exact log-likelihood, providing mode-covering training, stable optimization, and a tractable score.

\paragraph{Few-step generation and distillation.}
Reducing sampling steps is a major research direction.
Progressive distillation~\citep{salimans2022progressive} trains a student to match multi-step teacher outputs in fewer steps.
Consistency models~\citep{song2023consistency} learn to map any point on the trajectory directly to the clean image.
Distribution matching distillation (DMD)~\citep{yin2024improved} and latent consistency models~\citep{luo2023latent} further improve few-step quality via distributional matching objectives.
NFM~\citep{berthelot2026nfm} distills pretrained normalizing flow couplings to train faster flow-matching students.
\methodname{} is complementary to distillation approaches with a learned non-Gaussian reverse and trains a denoiser for fast inference.

\paragraph{Score-based denoising and refinement.}
The connection between denoising and score functions~\citep{song2021scorebased} has been exploited for test-time sample improvement.
TarFlow~\citep{zhainormalizing} introduced adding a small amount of noise and applying the gradient of the NF log-likelihood as a score-based denoiser; STARFlow~\citep{gu2025starflow} extended this to latent-space generation.
These methods perform independent per-sample denoising.
\methodname{} generalizes this to \emph{trajectory-level} denoising: since the generated trajectory is a correlated sequence from the Markov forward process, the NTM loss provides a joint score over all timesteps, and the covariance-weighted gradient correction exploits cross-timestep correlations for more effective refinement than per-sample approaches.

\paragraph{Trajectory-level modeling and flow maps.}
Several methods model generation across multiple trajectory points rather than per-step.
Consistency models~\citep{song2023consistency} and latent consistency models~\citep{luo2023latent} learn to project any noisy point directly to the clean endpoint.
FlowMaps~\citep{flowmaps} generalizes this by learning direct mappings between arbitrary pairs of time points on the probability flow ODE.
Mean flows~\citep{geng2025mean} learn one-step generators via flow matching with mean prediction.
These methods learn deterministic mappings via regression objectives.
\methodname{} is distinct in two ways: it retains a \emph{distributional} model of $p(\vx_s \mid \vx_t)$ (not a point estimate), enabling sampling diversity and likelihood evaluation; and it models the conditional at each step as a normalizing flow rather than collapsing all steps into a single mapping.

\section{Conclusion}
\label{sec:conclusion}

We introduced \fullname{} (\methodname{}), a framework that models each reverse conditional as a normalizing flow via an invertible transporter and a Gaussian predictor, yielding exact log-likelihood training.
\methodname{} supports training from scratch and finetuning from pretrained models, and its trajectory likelihood enables score-based denoising distillable into a four-step sampler.
On text-to-image benchmarks, \methodname{} significantly outperforms prior normalizing flow models and matches strong diffusion baselines with only 4 steps, while uniquely retaining exact likelihood over the generative trajectory.
Future work includes distribution-level post-training (e.g., adversarial or perceptual losses) to further boost few-step quality, scaling to higher resolutions, and exploring architectural designs that push exact-likelihood generation toward even fewer steps.

\bibliographystyle{plainnat}
\bibliography{main}

\appendix

\section{Theoretical Analysis}
\label{sec:app_theory}

\subsection{\methodname{} as a Conditional Normalizing Flow}
\label{sec:app_nf}

We establish that each reverse transition in \methodname{} defines a valid conditional normalizing flow with exact log-likelihood.
Consider a single denoising step from timestep $t$ to $s$ ($s < t$).
\methodname{} maps the clean-side sample $\vx_s$ to a latent $\vz$ via the composition of two invertible transformations:
\begin{enumerate}
    \item \textbf{Transporter} (spatial autoregressive flow): $\vu_s = f_\mathcal{T}(\vx_s)$, an invertible mapping from x-space to u-space via a stack of $L$ TarFlow-style~\citep{zhainormalizing} causal AR blocks with alternating scan directions.
    Each block $\ell$ applies an elementwise affine coupling
    $\vu^{(\ell)}_n = (\vu^{(\ell-1)}_n - \bm{\mu}^{(\ell)}(\vu^{(\ell-1)}_{<n})) / \bm{\sigma}^{(\ell)}(\vu^{(\ell-1)}_{<n}))$
    with a triangular Jacobian.
    \item \textbf{Predictor} (trajectory-level affine coupling): $\vz = (\vu_s - \bm{\mu}_\mathcal{P}(\vu_t, t, s)) / \bm{\sigma}_\mathcal{P}(\vu_t, t, s)$, an affine coupling in u-space conditioned on the noisier representation $\vu_t$.
\end{enumerate}
The composition $\vz = f_\mathcal{P}(f_\mathcal{T}(\vx_s); \vu_t, t, s)$ is invertible (both components are), and the exact log-likelihood follows from the change-of-variables formula:
\begin{equation}
\label{eq:app_nll_decomposition}
    \log p(\vx_s \mid \vx_t)
    = \underbrace{\log p_0(\vz)}_{\text{Gaussian prior}}
    + \underbrace{\log\big|\det J_{f_\mathcal{P}}\big|}_{\text{predictor}}
    + \underbrace{\sum_{\ell=1}^{L} \log\big|\det J_{f_\mathcal{T}^{(\ell)}}\big|}_{\text{transporter}}.
\end{equation}
Since the predictor is a diagonal affine coupling, $\log|\det J_{f_\mathcal{P}}| = -\sum_n \log \bm{\sigma}_\mathcal{P}^{(n)}$.
Each transporter block has a triangular Jacobian from the autoregressive structure.
Expanding the Gaussian prior term $\log p_0(\vz) = -\frac{1}{2}\|\vz\|^2 + \mathrm{const}$ recovers \cref{eq:ntm_nll} in the main text.

\paragraph{Relation to STARFlow.}
STARFlow~\citep{gu2025starflow} is a normalizing flow that models the \emph{marginal} image distribution $p(\vx)$ using the same deep-shallow architectural building blocks.
\methodname{} applies these blocks to a fundamentally different object: the \emph{conditional} distribution $p(\vx_s \mid \vx_t)$ at each denoising step.
This has two consequences.
First, the predictor in STARFlow operates spatially (causal attention over image patches within a single image), whereas in \methodname{} it operates over the trajectory dimension (non-causal attention across timestep levels), enabling cross-timestep reasoning.
Second, because the conditional $p(\vx_s \mid \vx_t)$ is simpler than the marginal $p(\vx)$---conditioning on $\vx_t$ already constrains the space of plausible images---\methodname{} requires fewer transporter blocks per step to achieve comparable expressiveness.

\subsection{Decomposition: Gaussian Denoising + Spatial Flow}
\label{sec:app_decomposition}

\methodname{} cleanly decomposes into two complementary components: a Gaussian denoising model in u-space (the predictor) and a non-Gaussian spatial transformation (the transporter).
We formalize this decomposition and show that without the transporter, \methodname{} reduces exactly to standard Gaussian diffusion.

\paragraph{Predictor alone $=$ Gaussian denoising.}
Suppose the transporter are absent, i.e., $f_\mathcal{T} = \mathrm{id}$ so that $\vu_s = \vx_s$.
The latent variable becomes $\vz = (\vx_s - \bm{\mu}_\mathcal{P}(\vx_t, t, s)) / \bm{\sigma}_\mathcal{P}(\vx_t, t, s)$, and the conditional distribution implied by the predictor is:
\begin{equation}
\label{eq:app_deep_only}
    p(\vx_s \mid \vx_t) = \mathcal{N}\!\big(\bm{\mu}_\mathcal{P}(\vx_t, t, s),\; \mathrm{diag}(\bm{\sigma}_\mathcal{P}^2)\big).
\end{equation}
This is a diagonal Gaussian---precisely the same family used by standard diffusion models and flow matching.
The \methodname{} loss in this case reduces to:
\begin{equation}
\label{eq:app_loss_no_shallow}
    \mathcal{L} = \sum_k \Big[\frac{1}{2}\Big\|\frac{\vx_{s_k} - \bm{\mu}_\mathcal{P}^{(k)}}{\bm{\sigma}_\mathcal{P}^{(k)}}\Big\|^2 + \sum_n \log \bm{\sigma}_\mathcal{P}^{(k,n)}\Big],
\end{equation}
which is the negative log-likelihood of a heteroscedastic Gaussian regression.
If $\bm{\sigma}_\mathcal{P}$ is further held fixed (not learned), minimizing over $\bm{\mu}_\mathcal{P}$ yields a weighted MSE loss, recovering the standard diffusion/flow-matching training objective up to a constant.

\paragraph{Transporter adds non-Gaussian expressiveness.}
The shallow autoregressive blocks introduce a \emph{nonlinear}, invertible change of coordinates $\vu_s = f_\mathcal{T}(\vx_s)$ before the Gaussian coupling is applied.
Even though the predictor still applies a Gaussian (affine) coupling in u-space, the \emph{implied distribution in x-space} is non-Gaussian because the inverse $\vx_s = f_\mathcal{T}^{-1}(\vu_s)$ is a nonlinear transformation of a Gaussian.
Formally:
\begin{equation}
\label{eq:app_nonGaussian}
    p(\vx_s \mid \vx_t) = \mathcal{N}\!\big(f_\mathcal{T}(\vx_s);\, \bm{\mu}_\mathcal{P},\, \mathrm{diag}(\bm{\sigma}_\mathcal{P}^2)\big) \cdot \big|\det J_{f_\mathcal{T}}(\vx_s)\big|.
\end{equation}
The Jacobian determinant $|\det J_{f_\mathcal{T}}|$ reweights the density to account for the nonlinear warping, allowing the model to represent multimodal, heavy-tailed, or skewed distributions in x-space even though u-space remains Gaussian.

\paragraph{Division of labor.}
In practice, the predictor captures the bulk of the denoising signal---predicting a good mean $\bm{\mu}_\mathcal{P}$ and an appropriate scale $\bm{\sigma}_\mathcal{P}$ in u-space---while the transporter learn the residual non-Gaussian structure via their spatial autoregressive coupling.
This division is efficient: the predictor uses a large Transformer backbone and concentrates most parameters on cross-timestep reasoning, while each transporter block is lightweight (2 layers) and handles local spatial dependencies.

\subsection{Effect of the FM Auxiliary Loss}
\label{sec:app_fm_aux}

When finetuning from a pretrained flow-matching model, \methodname{} adds a mean-alignment auxiliary loss (\cref{eq:fm_aux}):
\begin{equation}
    \mathcal{L}_\mathrm{aux} = \lambda \sum_{k} \big\| \bm{\mu}_\mathcal{P}^{(k)} - \bm{\mu}_\mathrm{FM}^{(k)} \big\|^2,
\end{equation}
where $\bm{\mu}_\mathrm{FM}$ is the denoising mean from a frozen copy of the pretrained backbone.
We analyze how this loss interacts with the \methodname{} likelihood objective.

\paragraph{Without $\mathcal{L}_\mathrm{aux}$ ($\lambda = 0$).}
The \methodname{} NLL objective jointly optimizes both the predictor ($\bm{\mu}_\mathcal{P}$, $\bm{\sigma}_\mathcal{P}$) and the transporter ($f_\mathcal{T}$).
In principle, the model is free to redistribute ``work'' between components: the predictor could learn a mean far from the pretrained solution if the transporter compensate.
In practice, this freedom can cause early-training instability, as the zero-initialized residual projection departs from the pretrained posterior before the transporter have learned a meaningful spatial mapping.

\paragraph{With $\mathcal{L}_\mathrm{aux}$ ($\lambda > 0$).}
The auxiliary loss anchors the predictor's mean prediction $\bm{\mu}_\mathcal{P}$ to the pretrained FM solution $\bm{\mu}_\mathrm{FM}$.
This has two consequences:
\begin{enumerate}
    \item \textbf{Stabilized u-space.} The u-space representation remains close to the pretrained model's latent space, providing a stable coordinate system in which the transporter can learn meaningful non-Gaussian corrections.
    \item \textbf{Non-Gaussian structure through $\bm{\sigma}_\mathcal{P}$ and $f_\mathcal{T}$.} Since $\bm{\mu}_\mathcal{P} \approx \bm{\mu}_\mathrm{FM}$, the non-Gaussian expressiveness must come from two sources: the learned scale $\bm{\sigma}_\mathcal{P}$ (which departs from the fixed Gaussian posterior variance $\sigma_\mathrm{post}$), and the transporter' spatial flow $f_\mathcal{T}$.
\end{enumerate}
In our experiments, we anneal $\lambda$ during training: starting at full strength to ensure stable initialization, then decaying so that the NLL objective can fine-tune the mean beyond the Gaussian approximation.

\paragraph{Limiting case: $\lambda \to \infty$.}
If $\lambda$ is very large, $\bm{\mu}_\mathcal{P}$ is forced to exactly match $\bm{\mu}_\mathrm{FM}$, and the predictor becomes equivalent to the pretrained Gaussian reverse step.
The \emph{only} source of non-Gaussian modeling is then the transporter.
The model reduces to: first, apply a spatial normalizing flow to transform $\vx_s$ into u-space; then, evaluate a fixed Gaussian posterior in u-space.
This is still more expressive than a standard diffusion model (which has no spatial flow), but less expressive than the full \methodname{} with learned $\bm{\mu}_\mathcal{P}$ and $\bm{\sigma}_\mathcal{P}$.

\subsection{Forward Transition Preserves Marginals}
\label{sec:app_marginal}

\begin{proposition}
\label{prop:marginal_preservation}
Let $q(\vx_t \mid \vx_0) = \mathcal{N}((1-t)\vx_0, t^2 \mI)$ and define the forward transition from time $s$ to $t$ ($s < t$) as:
\begin{equation}
    \vx_t = \alpha_{s,t}\,\vx_s + \sigma_{s,t}\,\bm{\epsilon}, \quad \bm{\epsilon} \sim \mathcal{N}(\mathbf{0}, \mI),
\end{equation}
with $\alpha_{s,t} = (1-t)/(1-s)$ and $\sigma_{s,t} = \sqrt{t^2 - \alpha_{s,t}^2 s^2}$.
If $\vx_s \sim q(\vx_s \mid \vx_0)$, then $\vx_t \sim q(\vx_t \mid \vx_0)$.
\end{proposition}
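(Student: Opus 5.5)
The approach is to compute the conditional law of $\vx_t$ given $\vx_0$ directly, using closure of Gaussians under affine maps and independent sums. Conditional on $\vx_0$, I write $\vx_s = (1-s)\vx_0 + s\,\bm{\epsilon}_s$ with $\bm{\epsilon}_s \sim \mathcal{N}(\mathbf{0},\mI)$. The transition noise $\bm{\epsilon}$ is drawn independently of $(\vx_s, \vx_0)$; this is implicit in the Markov construction of \cref{sec:prelim_traj}, and I will state it explicitly as an assumption. Substituting gives
\begin{equation*}
\vx_t = \alpha_{s,t}(1-s)\,\vx_0 + \alpha_{s,t}\, s\,\bm{\epsilon}_s + \sigma_{s,t}\,\bm{\epsilon}.
\end{equation*}
Given $\vx_0$, this is an affine function of the independent standard Gaussians $\bm{\epsilon}_s$ and $\bm{\epsilon}$, so it is Gaussian. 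It therefore suffices to match its mean and covariance with those of $q(\vx_t \mid \vx_0)$.

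\textbf{Mean.} The conditional mean is $\alpha_{s,t}(1-s)\vx_0$. Since $\alpha_{s,t} = (1-t)/(1-s)$, this equals $(1-t)\vx_0$, as required.

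\textbf{Covariance.} By independence, the covariance is $(\alpha_{s,t}^2 s^2 + \sigma_{s,t}^2)\mI$. The definition $\sigma_{s,t}^2 = t^2 - \alpha_{s,t}^2 s^2$ makes this exactly $t^2\mI$. Hence $\vx_t \mid \vx_0 \sim \mathcal{N}((1-t)\vx_0, t^2\mI) = q(\vx_t\mid\vx_0)$.

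\textbf{Well-definedness.} The only step with real content is checking that $\sigma_{s,t}$ is real, i.e.\ that $t^2 \ge \alpha_{s,t}^2 s^2$ for $0 \le s < t < 1$. This is equivalent to $t(1-s) \ge s(1-t)$, which simplifies to $t \ge s$. The inequality is strict for $s<t$, so $\sigma_{s,t} > 0$. At the endpoint $t=1$, $\alpha_{s,t}=0$ and $\sigma_{s,t}=1$, so the formulas still make sense. The case $s=1$ never arises, since $s<t\le 1$. This justifies the square root.

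\textbf{Extension to trajectories.} By induction over consecutive schedule points, the same computation shows that every point of the Markov trajectory has marginal $q(\vx_{t_k}\mid\vx_0)$. The argument therefore also covers the trajectory statement made in \cref{sec:prelim_traj}.
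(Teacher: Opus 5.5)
Your proposal is correct and follows the same route as the paper. Both arguments observe that $\vx_t$ is Gaussian given $\vx_0$, then match the mean $\alpha_{s,t}(1-s)\vx_0 = (1-t)\vx_0$ and the variance $\alpha_{s,t}^2 s^2 + \sigma_{s,t}^2 = t^2$; your explicit independence assumption on $\bm{\epsilon}$ and your check that $\sigma_{s,t}$ is real (via $t(1-s) - s(1-t) = t - s > 0$) are small but welcome details that the paper's proof leaves implicit.
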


\begin{proof}
Since $\vx_s \sim \mathcal{N}((1-s)\vx_0,\, s^2 \mI)$, the transformed variable $\vx_t = \alpha_{s,t}\vx_s + \sigma_{s,t}\bm{\epsilon}$ is Gaussian with:
\begin{align}
    \mathbb{E}[\vx_t \mid \vx_0]
    &= \alpha_{s,t}\,(1-s)\vx_0
    = \frac{1-t}{1-s}(1-s)\vx_0
    = (1-t)\vx_0, \\
    \mathrm{Var}[\vx_t \mid \vx_0]
    &= \alpha_{s,t}^2 s^2 + \sigma_{s,t}^2
    = \alpha_{s,t}^2 s^2 + t^2 - \alpha_{s,t}^2 s^2
    = t^2.
\end{align}
Hence $\vx_t \sim \mathcal{N}((1-t)\vx_0, t^2 \mI) = q(\vx_t \mid \vx_0)$.
\end{proof}

\subsection{Reverse Posterior Coefficients}
\label{sec:app_posterior}

We derive the closed-form expressions for the coefficients $A(t,s)$, $B(t,s)$, and $C(t,s)$ in the reverse Gaussian posterior $q(\vx_s \mid \vx_t, \vx_0)$, used in the finetuning parameterization (\cref{eq:posterior}).

\begin{proposition}
Under the forward process $q(\vx_t \mid \vx_0) = \mathcal{N}((1-t)\vx_0, t^2\mI)$ with Markov transition (\cref{eq:markov_transition}), the reverse posterior is:
\begin{equation}
    q(\vx_s \mid \vx_t, \vx_0) = \mathcal{N}\!\big(A(t,s)\,\vx_t + B(t,s)\,\vx_0,\;\; C(t,s)^2\,\mI\big),
\end{equation}
where
\begin{align}
\label{eq:app_A}
    A(t,s) &= \frac{s^2(1-t)}{t^2(1-s)}, \\
\label{eq:app_B}
    B(t,s) &= \frac{(t-s)(t+s-2ts)}{t^2(1-s)}, \\
\label{eq:app_C}
    C(t,s)^2 &= \frac{s^2(t-s)(t+s-2ts)}{t^2(1-s)^2}.
\end{align}
\end{proposition}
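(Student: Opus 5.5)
The plan is to use Bayes' rule in its standard linear-Gaussian form, relying on the Markov structure of \cref{eq:markov_transition}. Because the chain is Markov, $q(\vx_t \mid \vx_s, \vx_0) = q(\vx_t \mid \vx_s) = \mathcal{N}(\alpha_{s,t}\vx_s, \sigma_{s,t}^2\mI)$. By \cref{prop:marginal_preservation}, the prior is $q(\vx_s \mid \vx_0) = \mathcal{N}((1-s)\vx_0, s^2\mI)$. Then
\[
q(\vx_s \mid \vx_t,\vx_0) \propto q(\vx_t \mid \vx_s)\,q(\vx_s \mid \vx_0).
\]
This is a product of two Gaussians in $\vx_s$, so the posterior is Gaussian. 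Since all covariances are isotropic, it suffices to work coordinatewise with scalars.

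Equivalently, and more cleanly, I will treat $(\vx_s,\vx_t)$ given $\vx_0$ as jointly Gaussian and condition on $\vx_t$. The joint moments are:
\begin{itemize}
\item $\mathbb{E}[\vx_s \mid \vx_0] = (1-s)\vx_0$ and $\mathbb{E}[\vx_t \mid \vx_0] = (1-t)\vx_0$;
\item $\mathrm{Var}(\vx_s) = s^2$ and $\mathrm{Var}(\vx_t) = t^2$, the latter by \cref{prop:marginal_preservation};
\item $\mathrm{Cov}(\vx_s,\vx_t) = \alpha_{s,t}s^2$, since $\bm{\epsilon}$ is independent of $\vx_s$.
\end{itemize}
The Gaussian conditioning formula then gives
\[
\text{mean} = (1-s)\vx_0 + \frac{\alpha_{s,t}s^2}{t^2}\big(\vx_t - (1-t)\vx_0\big), \qquad
\text{variance} = s^2 - \frac{\alpha_{s,t}^2 s^4}{t^2}.
\]

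Reading off the coefficient of $\vx_t$ gives $A = \alpha_{s,t}s^2/t^2 = s^2(1-t)/(t^2(1-s))$, which matches \cref{eq:app_A} immediately.

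The coefficient of $\vx_0$ is $B = (1-s) - A(1-t)$. To put it over the common denominator $t^2(1-s)$, I compute
\[
t^2(1-s)^2 - s^2(1-t)^2 = \big(t(1-s) - s(1-t)\big)\big(t(1-s) + s(1-t)\big) = (t-s)(t+s-2ts),
\]
using a difference of squares. This yields \cref{eq:app_B}.

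For the variance, I write
\[
C^2 = \frac{s^2}{t^2}\big(t^2 - \alpha_{s,t}^2 s^2\big) = \frac{s^2\big(t^2(1-s)^2 - s^2(1-t)^2\big)}{t^2(1-s)^2}.
\]
The same difference-of-squares identity factors the numerator, giving \cref{eq:app_C}. As a byproduct, this also shows $\sigma_{s,t}^2 = (t-s)(t+s-2ts)/(1-s)^2 \ge 0$ for $0 \le s < t \le 1$, so the transition in the statement is well defined.

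The only step that needs care is the algebraic simplification of $B$ and $C^2$. Spotting the difference-of-squares factorization $t^2(1-s)^2 - s^2(1-t)^2 = (t-s)(t+s-2ts)$ is the key to matching the stated closed forms; the rest is routine Gaussian conditioning. As a sanity check I will verify the edge cases: $s\to 0$ should give $A=0$, $B=1$, $C=0$, and $s\to t$ should give $A=1$, $B=0$, $C=0$.
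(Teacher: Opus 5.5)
Your proof is correct. You set up the same Bayes factorization as the paper, but you evaluate it by a different computation. The paper works in information form. It multiplies $q(\vx_t \mid \vx_s)$ by $q(\vx_s \mid \vx_0)$ and computes the precision $\Lambda = 1/s^2 + \alpha_{s,t}^2/\sigma_{s,t}^2$ and the information vector $\bm{\eta} = (1-s)\vx_0/s^2 + \alpha_{s,t}\vx_t/\sigma_{s,t}^2$. It then simplifies $\Lambda$ via $D + s^2(1-t)^2 = t^2(1-s)^2$ with $D = (t-s)(t+s-2ts)$, and reads off the mean as $C^2\bm{\eta}$.

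You instead work in moment form, conditioning the jointly Gaussian pair $(\vx_s,\vx_t)$ on $\vx_t$ via the Schur complement. This way $A = \alpha_{s,t}s^2/t^2$ appears at once as a regression coefficient, and $C^2 = s^2\sigma_{s,t}^2/t^2$ needs no inversion of a precision. The one nontrivial piece of algebra, the difference-of-squares factorization of $D$, is common to both routes.

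Your route buys a little extra: it divides only by $t^2 > 0$. It therefore handles the endpoint $s = 0$ directly; there $q(\vx_s \mid \vx_0)$ degenerates to a point mass, which happens when $t_0 = t_\mathrm{min} = 0$. The paper's $1/s^2$ terms implicitly assume $s > 0$ and recover $s = 0$ only by continuity. The paper's route, for its part, is the standard DDPM-style derivation and makes the posterior precision explicit. Your remark that $\sigma_{s,t}^2 = (t-s)(t+s-2ts)/(1-s)^2$ is nonnegative (in fact strictly positive for $0 \le s < t \le 1$) is a well-definedness check the paper leaves implicit.
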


\begin{proof}
By Bayes' rule, $q(\vx_s \mid \vx_t, \vx_0) \propto q(\vx_t \mid \vx_s)\, q(\vx_s \mid \vx_0)$, where:
\begin{align}
    q(\vx_s \mid \vx_0) &= \mathcal{N}\!\big((1{-}s)\vx_0,\; s^2 \mI\big), \\
    q(\vx_t \mid \vx_s) &= \mathcal{N}\!\big(\alpha_{s,t}\vx_s,\; \sigma_{s,t}^2 \mI\big).
\end{align}
The product of two Gaussians in $\vx_s$ is proportional to $\exp(-\frac{1}{2}\vx_s^\top \bm{\Lambda} \vx_s + \bm{\eta}^\top \vx_s)$ with precision and information vector:
\begin{align}
    \bm{\Lambda} &= \frac{1}{s^2}\mI + \frac{\alpha_{s,t}^2}{\sigma_{s,t}^2}\mI, \\
    \bm{\eta} &= \frac{(1-s)\vx_0}{s^2} + \frac{\alpha_{s,t}\vx_t}{\sigma_{s,t}^2}.
\end{align}
We first compute $\sigma_{s,t}^2 = t^2 - \alpha_{s,t}^2 s^2$.
Defining $D := t^2(1{-}s)^2 - s^2(1{-}t)^2$, we note that:
\begin{equation}
    D = \big[t(1{-}s) - s(1{-}t)\big]\big[t(1{-}s) + s(1{-}t)\big] = (t{-}s)(t{+}s{-}2ts),
\end{equation}
so $\sigma_{s,t}^2 = D / (1{-}s)^2$.

\textbf{Precision.}
\begin{equation}
    \Lambda = \frac{1}{s^2} + \frac{(1{-}t)^2/(1{-}s)^2}{D/(1{-}s)^2}
    = \frac{1}{s^2} + \frac{(1{-}t)^2}{D}
    = \frac{D + s^2(1{-}t)^2}{s^2 D}
    = \frac{t^2(1{-}s)^2}{s^2 D},
\end{equation}
where the last step uses $D + s^2(1{-}t)^2 = t^2(1{-}s)^2$.

\textbf{Posterior variance.}
\begin{equation}
    C^2 = \Lambda^{-1} = \frac{s^2 D}{t^2(1{-}s)^2} = \frac{s^2(t{-}s)(t{+}s{-}2ts)}{t^2(1{-}s)^2}.
\end{equation}

\textbf{Posterior mean.}
The information vector is $\bm{\eta} = (1{-}s)\vx_0 / s^2 + \alpha_{s,t}\vx_t / \sigma_{s,t}^2$.
Computing $\alpha_{s,t}/\sigma_{s,t}^2 = [(1{-}t)/(1{-}s)] \cdot (1{-}s)^2/D = (1{-}t)(1{-}s)/D$, the posterior mean is:
\begin{align}
    \bm{\mu}_\mathrm{post}
    &= C^2 \cdot \bm{\eta} \nonumber \\
    &= \frac{s^2 D}{t^2(1{-}s)^2} \left[\frac{(1{-}s)\vx_0}{s^2} + \frac{(1{-}t)(1{-}s)\vx_t}{D}\right] \nonumber \\
    &= \frac{D\,\vx_0}{t^2(1{-}s)} + \frac{s^2(1{-}t)\,\vx_t}{t^2(1{-}s)} \nonumber \\
    &= A(t,s)\,\vx_t + B(t,s)\,\vx_0.
\end{align}
In the finetuning recipe (\cref{sec:from_pretrained}), $\vx_0$ is replaced by the predicted clean sample $\hat{\vx}_0 = \vx_t - t \cdot \vv_\theta$, where $\vv_\theta$ is the pretrained velocity prediction.
\end{proof}

\paragraph{Numerical values for a 4-step schedule.}
\cref{tab:app_posterior_values} provides representative values of $A$, $B$, and $C$ for the default 4-step schedule used in our experiments.

\begin{table}[h]
\centering
\caption{Posterior coefficients for a representative 4-step schedule.}
\label{tab:app_posterior_values}
\small
\begin{tabular}{cccccc}
\toprule
Step & $t$ & $s$ & $A(t,s)$ & $B(t,s)$ & $C(t,s)$ \\
\midrule
1 & 1.000 & 0.754 & 0.140 & 0.614 & 0.371 \\
2 & 0.754 & 0.509 & 0.271 & 0.496 & 0.362 \\
3 & 0.509 & 0.263 & 0.470 & 0.362 & 0.297 \\
4 & 0.263 & 0.020 & 0.948 & 0.049 & 0.049 \\
\bottomrule
\end{tabular}
\end{table}

\subsection{Trajectory Covariance Matrix}
\label{sec:app_covariance}

The self-refinement step (\cref{eq:refinement}) uses a trajectory covariance matrix $\bm{S}$ whose $(i,j)$-th entry is the covariance between $\vx_{t_i}$ and $\vx_{t_j}$ conditioned on $\vx_0$.

\begin{proposition}
\label{prop:traj_cov}
Under the Markov forward process, for any two timesteps $t_i, t_j$ in the trajectory:
\begin{equation}
\label{eq:app_cov}
    \mathrm{Cov}(\vx_{t_i}, \vx_{t_j} \mid \vx_0) = \frac{\min(t_i, t_j)^2 \big(1 - \max(t_i, t_j)\big)}{1 - \min(t_i, t_j)} \cdot \mI.
\end{equation}
\end{proposition}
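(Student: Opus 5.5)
Without loss of generality I take $t_i \le t_j$ and write $s = t_i$, $t = t_j$; the case $t_i = t_j$ is just the marginal variance $s^2$ from \cref{eq:forward}, which agrees with the formula since $\min = \max$ gives $s^2(1-s)/(1-s) = s^2$.

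For $s < t$ I unroll the Markov chain of \cref{eq:markov_transition} along the intermediate schedule points $s = \tau_0 < \tau_1 < \cdots < \tau_m = t$. By induction this gives $\vx_t = \big(\prod_k \alpha_{\tau_{k-1},\tau_k}\big)\vx_s + \bm{\xi}$, where $\bm{\xi}$ is a linear combination of fresh noises drawn after time $s$. Hence $\bm{\xi}$ is independent of $\vx_s$ given $\vx_0$. The product of the $\alpha$'s telescopes: $\prod_k \frac{1-\tau_k}{1-\tau_{k-1}} = \frac{1-t}{1-s} = \alpha_{s,t}$. This telescoping is the one point that needs care, because it is what makes the answer independent of the intermediate grid.

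Then, conditioning on $\vx_0$ throughout,
\[
\mathrm{Cov}(\vx_s,\vx_t\mid\vx_0)=\alpha_{s,t}\,\mathrm{Var}(\vx_s\mid\vx_0)+\mathrm{Cov}(\vx_s,\bm{\xi}\mid\vx_0)=\alpha_{s,t}\,s^2\,\mI+\mathbf{0}.
\]
The value $\mathrm{Var}(\vx_s\mid\vx_0)=s^2\mI$ comes from \cref{prop:marginal_preservation} applied inductively from $t_0$. Substituting $\alpha_{s,t} = (1-t)/(1-s)$ gives $s^2(1-t)/(1-s)\,\mI$, which is \cref{eq:app_cov} with $s = \min(t_i,t_j)$ and $t = \max(t_i,t_j)$.

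Symmetry of covariance handles the case $t_i > t_j$. The only other subtlety is the starting point $t_0 = t_{\min} > 0$. Its marginal is $q(\vx_{t_0}\mid\vx_0)$ by construction, so the induction base holds.
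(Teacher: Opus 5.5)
Your proof is correct and follows the paper's argument: write $\vx_{t_j}$ as $\alpha_{t_i,t_j}\vx_{t_i}$ plus noise independent of $\vx_{t_i}$ given $\vx_0$, so the covariance is $\alpha_{t_i,t_j}\,t_i^2\,\mI$. Your telescoping of $\prod_k \alpha_{\tau_{k-1},\tau_k}$ is a worthwhile addition, because the paper applies the one-step transition directly to non-adjacent $t_i, t_j$ without checking that the composed chain has that form (and you rightly need only the coefficient and independence, not the composite noise variance); one caveat, shared with the paper, is that your diagonal check $s^2(1-s)/(1-s)=s^2$ fails at $t_i=t_j=t_T=1$, where the formula reads $0/0$ and that entry must be set to the marginal variance $1$ directly.
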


\begin{proof}
Without loss of generality, assume $t_i \leq t_j$ (so $\min = t_i$, $\max = t_j$).
Write $\vx_{t_i} = (1{-}t_i)\vx_0 + \bm{\xi}_i$ where $\bm{\xi}_i \sim \mathcal{N}(\mathbf{0}, t_i^2 \mI)$ is the noise component.
By the Markov property (\cref{eq:markov_transition}):
\begin{equation}
    \vx_{t_j} = \alpha_{t_i, t_j}\,\vx_{t_i} + \sigma_{t_i, t_j}\,\bm{\epsilon}, \quad \bm{\epsilon} \sim \mathcal{N}(\mathbf{0}, \mI) \text{ independent of } \vx_{t_i}.
\end{equation}
Therefore:
\begin{align}
    \mathrm{Cov}(\vx_{t_i}, \vx_{t_j} \mid \vx_0)
    &= \mathrm{Cov}(\bm{\xi}_i,\; \alpha_{t_i, t_j}\bm{\xi}_i + \sigma_{t_i, t_j}\bm{\epsilon}) \nonumber \\
    &= \alpha_{t_i, t_j}\,\mathrm{Var}(\bm{\xi}_i) \nonumber \\
    &= \frac{1 - t_j}{1 - t_i} \cdot t_i^2 \cdot \mI
    = \frac{t_i^2(1 - t_j)}{1 - t_i} \cdot \mI.
\end{align}
Since $t_i = \min(t_i, t_j)$ and $t_j = \max(t_i, t_j)$, this matches \cref{eq:app_cov}.
The diagonal case ($i = j$) gives $\mathrm{Var}(\vx_{t_i} \mid \vx_0) = t_i^2$, consistent with the formula via $\lim_{t_j \to t_i} t_i^2(1-t_j)/(1-t_i) = t_i^2$.
\end{proof}

\paragraph{Self-refinement update.}
Using this covariance, the self-refinement (\cref{eq:refinement}) applies a covariance-weighted gradient step:
\begin{equation}
    \hat{\vx} \leftarrow \frac{\hat{\vx} - \bm{S} \,\nabla_{\hat{\vx}} \mathcal{L}_{\text{\methodname{}}}}{1 - \vt},
\end{equation}
where the matrix-vector product $\bm{S}\,\nabla_{\hat{\vx}} \mathcal{L}$ couples the gradient across timesteps according to their noise correlation.
This is more effective than per-step independent correction (which would use only the diagonal $\mathrm{Var}(\vx_{t_i}) = t_i^2$), because correcting an error at one timestep propagates correlated corrections to all other timesteps.

\section{Algorithm Pseudocode}
\label{sec:app_algorithms}

\subsection{Training}
\label{sec:app_train}

\begin{algorithm}[h]
\caption{\methodname{} Training (single iteration)}
\label{alg:training}
\begin{algorithmic}[1]
\REQUIRE Clean data $\vx_0$, condition $\vy$ (text or class label), number of steps $T$, noise range $[t_\mathrm{min}^{\mathrm{lo}}, t_\mathrm{min}^{\mathrm{hi}}]$
\STATE Sample per-example minimum noise: $t_\mathrm{min} \sim \mathrm{Uniform}[t_\mathrm{min}^{\mathrm{lo}}, t_\mathrm{min}^{\mathrm{hi}}]$
\STATE Compute shifted timestep schedule: $(t_0, t_1, \ldots, t_T)$ with $t_0 = t_\mathrm{min}$
\STATE \textbf{Forward trajectory:} for $k = 0, \ldots, T{-}1$:
\STATE \quad $\bm{\epsilon}_k \sim \mathcal{N}(\mathbf{0}, \mI)$
\STATE \quad $\vx_{t_{k+1}} = \alpha_{t_k, t_{k+1}} \vx_{t_k} + \sigma_{t_k, t_{k+1}} \bm{\epsilon}_k$
\STATE \textbf{Transporter} (spatial AR flow): $\vu_{t_k} = f_\mathcal{T}(\vx_{t_k})$ for all $k$, accumulate $\log|\det J_{f_\mathcal{T}}|$
\STATE \textbf{Predictor} (trajectory coupling): for each consecutive pair $(t_{k+1}, t_k)$:
\STATE \quad $(\bm{\mu}_\mathcal{P}^{(k)}, \bm{\sigma}_\mathcal{P}^{(k)}) = \mathrm{DeepBlock}(\vu_{t_{k+1}}, t_{k+1}, t_k, \vy)$
\STATE \quad $\vz_k = (\vu_{t_k} - \bm{\mu}_\mathcal{P}^{(k)}) / \bm{\sigma}_\mathcal{P}^{(k)}$
\STATE \textbf{NTM loss}: $\mathcal{L}_{\text{\methodname{}}} = \sum_{k=1}^{T} \big[\frac{1}{2}\|\vz_k\|^2 + \sum_n \log \bm{\sigma}_\mathcal{P}^{(k,n)}\big] - \sum_\ell \log|\det J_{f_\mathcal{T}^{(\ell)}}|$
\STATE (\textit{Optional}) \textbf{FM auxiliary loss}: $\mathcal{L}_\mathrm{aux} = \lambda \sum_k \|\bm{\mu}_\mathcal{P}^{(k)} - \bm{\mu}_\mathrm{FM}^{(k)}\|^2$
\STATE Update $\theta$ via $\nabla_\theta (\mathcal{L}_{\text{\methodname{}}} + \mathcal{L}_\mathrm{aux})$
\end{algorithmic}
\end{algorithm}

\subsection{Sampling}
\label{sec:app_sample}

\begin{algorithm}[h]
\caption{\methodname{} Sampling}
\label{alg:sampling}
\begin{algorithmic}[1]
\REQUIRE Condition $\vy$, number of steps $T$, guidance scale $w$, schedule $(t_0, t_1, \ldots, t_T)$
\STATE Sample initial noise: $\hat{\vu}_{t_T} \sim \mathcal{N}(\mathbf{0}, \mI)$
\STATE \textbf{Predictor reverse} (parallel over spatial, sequential over $k$):
\FOR{$k = T, T{-}1, \ldots, 1$}
    \STATE $\vz_k \sim \mathcal{N}(\mathbf{0}, \mI)$
    \STATE $(\bm{\mu}_\mathcal{P}^{(k)}, \bm{\sigma}_\mathcal{P}^{(k)}) = \mathrm{DeepBlock}(\hat{\vu}_{t_k}, t_k, t_{k-1}, \vy)$
    \STATE (\textit{If CFG}) Apply guidance: $(\bm{\mu}_\mathcal{P}^{(k)}, \bm{\sigma}_\mathcal{P}^{(k)}) \leftarrow \mathrm{CFG}(\cdot, w)$
    \STATE $\hat{\vu}_{t_{k-1}} = \vz_k \cdot \bm{\sigma}_\mathcal{P}^{(k)} + \bm{\mu}_\mathcal{P}^{(k)}$
\ENDFOR
\STATE \textbf{Transporter inverse} (sequential AR decoding with KV-cache):
\STATE \quad $\hat{\vx}_{t_0} = f_\mathcal{T}^{-1}(\hat{\vu}_{t_0})$
\STATE (\textit{Optional}) \textbf{Self-refinement}: apply \cref{alg:refinement}
\STATE (\textit{Optional}) \textbf{Learned denoiser}: $\hat{\vx}_0 = D_\phi(\hat{\vu}, \vy, \vt)$
\STATE Decode: $\mathrm{image} = \mathrm{VAE}.\mathrm{decode}(\hat{\vx}_{t_0})$
\end{algorithmic}
\end{algorithm}

\subsection{Trajectory Self-Refinement}
\label{sec:app_refine}

\begin{algorithm}[h]
\caption{Trajectory Self-Refinement}
\label{alg:refinement}
\begin{algorithmic}[1]
\REQUIRE Generated trajectory $\hat{\vx} = (\hat{\vx}_{t_0}, \ldots, \hat{\vx}_{t_T})$, frozen \methodname{} model, schedule $(t_0, \ldots, t_T)$
\STATE Enable gradients w.r.t.\ $\hat{\vx}$
\STATE Forward pass through \methodname{}: compute $\mathcal{L}_{\text{\methodname{}}}(\hat{\vx})$
\STATE Compute gradient: $\vg = \nabla_{\hat{\vx}} \mathcal{L}_{\text{\methodname{}}}$
\STATE (\textit{Optional}) Percentile-based gradient clipping on $\vg$
\STATE Compute trajectory covariance: $[\bm{S}]_{ij} = \min(t_i, t_j)^2(1 - \max(t_i, t_j)) / (1 - \min(t_i, t_j))$
\STATE Covariance-weighted correction: $\hat{\vx} \leftarrow \hat{\vx} - \bm{S}\,\vg$ \quad \COMMENT{couples gradients across timesteps}
\STATE Normalize to clean domain: $\hat{\vx} \leftarrow \hat{\vx}\, /\, (1 - \vt)$
\RETURN $\hat{\vx}$
\end{algorithmic}
\end{algorithm}

\section{Implementation Details}
\label{sec:app_implementation}

\subsection{Model Architecture}
\label{sec:app_architecture}

\cref{tab:app_architecture} summarizes the architectural specifications of the \methodname{} models used in our experiments.

\begin{table}[h]
\centering
\small
\caption{Architectural specifications of \methodname{} models.}
\label{tab:app_architecture}
\setlength{\tabcolsep}{5pt}
\begin{tabular}{lcc}
\toprule
 & \textbf{From scratch} & \textbf{Finetuned} \\
\midrule
Hidden dimension & 3072 & 3072 \\
Number of blocks & 3 & 3 \\
Layers per block & [4, 4, 24] & [4, 4, 24] \\
\quad Transporter & Blocks 1--2 (4 layers each) & Blocks 1--2 (4 layers each) \\
\quad Predictor & Block 3 (24 layers) & FLUX.2-klein (4B) \\
Patch size & 1 & 2 \\
KV heads & 8 & 8 \\
Positional encoding & 2D RoPE & 2D RoPE \\
Transporter scan order & Alternating (L$\to$R, R$\to$L) & Alternating (L$\to$R, R$\to$L) \\
\midrule
Pretrained backbone & --- & FLUX.2-klein (4B) \\
Denoising mode & --- & \texttt{true\_reverse} \\
\texttt{no\_delta\_mean} & --- & 1 ($\bm{\mu}_\mathcal{P} = \bm{\mu}_\mathrm{post}$) \\
\bottomrule
\end{tabular}
\end{table}

\paragraph{Predictor.}
In the from-scratch setting, the predictor is a standard non-causal Transformer that processes all timestep levels in parallel.
It takes as input the u-space representations $(\vu_{t_0}, \ldots, \vu_{t_T})$, concatenated with text embeddings $\vy$, and predicts per-step coupling parameters $(\bm{\mu}_\mathcal{P}, \bm{\sigma}_\mathcal{P})$ via a linear projection layer.
Timestep conditioning is provided through additive sinusoidal embeddings.

In the finetuned setting, the predictor wraps a pretrained flow-matching backbone (FLUX.2).
The backbone's last hidden states are captured and fed to a zero-initialized projection layer $\mathrm{proj}_\mathrm{out}: \mathbb{R}^{d} \to \mathbb{R}^{2c}$ that outputs the residual corrections $(\delta_\mu, \delta_\sigma)$.
At initialization, $\mathrm{proj}_\mathrm{out} = \mathbf{0}$, so the predictor exactly reproduces the pretrained Gaussian posterior.

\paragraph{Transporter.}
Each transporter block is a TarFlow-style causal autoregressive flow with 2 Transformer layers.
Blocks alternate between identity and flip permutations (left-to-right and right-to-left scan directions) for better spatial mixing.
At the highest noise level ($t \approx 1$), the transporter are skipped (identity transform) since the input is nearly isotropic Gaussian and the spatial AR coupling would be uninformative.

\subsection{Training Hyperparameters}
\label{sec:app_hyperparams}
The hyperparameters are listed in \Cref{tab:app_hyperparams}.

\begin{table}[h]
\centering
\caption{Training hyperparameters.}
\label{tab:app_hyperparams}
\small
\setlength{\tabcolsep}{5pt}
\begin{tabular}{lcc}
\toprule
 & \textbf{From Scratch (ImageNet)} & \textbf{Finetuned} \\
\midrule
Optimizer & AdamW & AdamW \\
$(\beta_1, \beta_2)$ & $(0.9, 0.95)$ & $(0.9, 0.95)$ \\
Weight decay & $10^{-4}$ & $10^{-4}$ \\
Peak learning rate & $10^{-4}$ & $5 \times 10^{-5}$ \\
Minimum learning rate & $10^{-6}$ & $10^{-6}$ \\
LR schedule & Cosine with warmup & Cosine with warmup \\
Precision & bfloat16 & bfloat16 \\
Distributed strategy & FSDP2 & FSDP2 \\
\midrule
Denoising steps ($T$) & 4 & 4 \\
$t_\mathrm{min}$ range & $\mathrm{Uniform}[0.0, 0.05]$ & $\mathrm{Uniform}[0.0, 0.05]$ \\
CFG dropout & 10\% & 10\% \\
\midrule
FM aux loss weight ($\lambda$) & --- & 2.5 \\
FM aux loss type & --- & MSE \\
$\lambda$ annealing & --- & Cosine decay \\
\bottomrule
\end{tabular}

\end{table}

\subsection{Denoiser Architecture}
\label{sec:app_denoiser}

The learned denoiser $g_\phi$ (\cref{sec:denoiser}) is a lightweight Transformer that takes the predictor output $\vu_{t_0}$ at the cleanest level as input and produces a denoised image $\hat{\vx}_0^{\mathrm{den}}$ in a single forward pass.
Since the trajectory is Markov, $\vu_{t_0}$ contains all the information needed to deterministically predict the clean output.
\begin{itemize}[leftmargin=*]
    \item \textbf{Position encoding}: 2D rotary embeddings over spatial (row, column) dimensions.
    \item \textbf{Attention}: Full non-causal attention over all spatial positions.
    \item \textbf{Conditioning}: Text embeddings $\vy$ are concatenated to the input sequence.
    \item \textbf{Output}: A single predicted clean image in patch space.
    \item \textbf{Training}: After the main \methodname{} model converges, the frozen model generates targets via trajectory score denoising, and $g_\phi$ is trained with MSE loss (\cref{eq:denoiser_loss}).
\end{itemize}

\subsection{Timestep Schedule}
\label{sec:app_schedule}

We use a shifted timestep schedule~\citep{esser2024scaling} that adapts to the input sequence length.
Given $T$ denoising steps, the base schedule is:
\begin{equation}
    \tilde{\sigma}_k = \frac{k}{T}, \quad k = 1, \ldots, T,
\end{equation}
which is then shifted via a sequence-length-dependent parameter $\mu$:
\begin{equation}
\label{eq:app_shift}
    \sigma_k = \frac{e^{\mu}}{e^{\mu} + 1/\tilde{\sigma}_k - 1}, \quad \mu = 0.5 + 0.65 \cdot \frac{L_\mathrm{seq} - 256}{4096 - 256},
\end{equation}
where $L_\mathrm{seq}$ is the spatial sequence length (number of patches).
The final schedule is $(t_\mathrm{min}, \sigma_1, \ldots, \sigma_T)$ in ascending order, with $t_\mathrm{min}$ drawn per-sample from $\mathrm{Uniform}[t_\mathrm{min}^{\mathrm{lo}}, t_\mathrm{min}^{\mathrm{hi}}]$ for robustness across noise levels during training.

\subsection{Classifier-Free Guidance}
\label{sec:app_cfg}

At inference, we use a logits-guided formulation of classifier-free guidance~\citep{ho2022classifier} that operates on the coupling parameters rather than the predicted sample.
Given conditional predictions $(\bm{\mu}_c, \bm{\sigma}_c)$ and unconditional predictions $(\bm{\mu}_u, \bm{\sigma}_u)$ from the predictor, the guided parameters are:
\begin{align}
\label{eq:app_cfg}
    s &= \Big(\frac{\bm{\sigma}_c}{\bm{\sigma}_u}\Big)^2 \quad \text{(clipped to } [0, 1]\text{)}, \\
    \bm{\sigma}_\mathrm{eff} &= \frac{\bm{\sigma}_c}{\sqrt{1 + w - w \cdot s}}, \\
    \bm{\mu}_\mathrm{eff} &= \frac{(1+w)\,\bm{\mu}_c - w \cdot s \cdot \bm{\mu}_u}{1 + w - w \cdot s},
\end{align}
where $w$ is the guidance scale.
This formulation is inspired by the logit-space interpretation: it corresponds to $(1{+}w) \log p_c - w \log p_u$ applied to the Gaussian coupling in u-space, which naturally adjusts both the mean and scale (unlike standard linear guidance that only modifies the mean).

\section{Evaluation Benchmarks}
\label{sec:app_benchmarks}

\subsection{GenEval}
\label{sec:app_geneval}

GenEval~\citep{ghosh2023geneval} is a compositional text-to-image evaluation benchmark that tests fine-grained generation capabilities across six task categories:
\begin{itemize}
    \item \textbf{Single object}: generating a single named object correctly.
    \item \textbf{Two objects}: generating two distinct objects in the same scene.
    \item \textbf{Counting}: producing the exact number of objects specified.
    \item \textbf{Colors}: assigning the correct color to objects.
    \item \textbf{Position}: placing objects in the specified spatial relationship (e.g., ``left of'', ``above'').
    \item \textbf{Color attribution}: binding the correct color to the correct object when multiple colored objects are described.
\end{itemize}
Each task is scored by an object-detection model that verifies whether the specified objects, attributes, and relations are present.
The overall score is the average accuracy across all six tasks.
We generate 4 images per prompt and report the average detection rate.

\subsection{DPG-Bench}
\label{sec:app_dpg}

DPG-Bench~\citep{hu2024ella} (Dense Prompt Graph Benchmark) evaluates text-to-image alignment using long, detailed prompts that describe complex scenes with multiple entities, attributes, and relations.
Unlike GenEval which uses short compositional prompts, DPG-Bench tests whether models can faithfully follow dense, paragraph-length descriptions.
Evaluation is performed using a VQA model (BLIP-2) that answers questions about the generated image corresponding to each semantic element in the prompt.
The benchmark reports scores across five L1 categories:
\begin{itemize}
    \item \textbf{Attribute} (color, shape, size, texture, other)
    \item \textbf{Entity} (part, state, whole)
    \item \textbf{Global} (overall scene coherence)
    \item \textbf{Other} (counting, text rendering)
    \item \textbf{Relation} (spatial, non-spatial)
\end{itemize}
The overall DPG-Bench score is the average across all L1 categories, reported as a percentage.

\subsection{Class-Conditional ImageNet}
\label{sec:app_imagenet}

As a proof-of-concept for training \methodname{} from scratch, we evaluate on class-conditional ImageNet 256$\times$256 using the FAE latent space~\citep{gao2025one} ($16\times$ spatial compression, 32-dim latents).
\cref{tab:imagenet} reports FID-50K for \methodname{} at different step counts alongside representative baselines.

\begin{table}[h]
\centering
\small
\setlength{\tabcolsep}{4pt}
\caption{\textbf{Class-conditional ImageNet 256$\times$256} (FID-50K).
Steps: total sequential generation steps (denoising or autoregressive).
\methodname{} achieves competitive FID with significantly fewer steps than prior normalizing flows, using only the NLL training objective without distribution-level losses.}
\begin{tabular}{llccc}
\toprule
Method & Type & \#Params & Steps & FID$\downarrow$ \\
\midrule
DiT-XL/2~\citep{peebles2023dit} & DM & 675M & 250 & 2.27 \\
SiT-XL~\citep{ma2024sit} & DM & 675M & 250 & 2.06 \\
\midrule
LlamaGen~\citep{sun2024autoregressive} & AR & 3.1B & 256 & 2.18 \\
VAR~\citep{tian2024var} & AR & 2.0B & 10 & 1.73 \\
DART~\citep{gu2024dart} & AR & 820M & 16 & 3.82 \\
\midrule
TarFlow~\citep{zhainormalizing} & NF & 1.4B & 1024 & 5.56 \\
STARFlow (VAE)~\citep{gu2025starflow} & NF & 1.4B & 1024 & 2.40 \\
STARFlow (FAE)~\citep{gao2025one} & NF & 1.4B & 256 & 2.67 \\
\midrule
\methodname{} (Ours) & NF & 1.4B & 4 & 3.83 \\
\methodname{} (Ours) & NF & 1.4B & 8 & 3.24 \\
\methodname{} (Ours) & NF & 1.4B & 16 & 2.80 \\
\bottomrule
\end{tabular}
\label{tab:imagenet}
\end{table}

\methodname{} achieves 2.80 FID with 16 steps---comparable to STARFlow (FAE) at 2.67 which requires 256 autoregressive steps---demonstrating that the normalizing flow framework can produce competitive results with dramatically fewer sequential steps.
Notably, these results use \emph{only} the exact NLL training objective without any distribution-level losses (e.g., adversarial or perceptual).
Recent work~\citep{yin2024one,yin2024improved,yang2026representation} has shown that distribution-level finetuning can substantially boost few-step generators beyond their base performance;
\methodname{}'s stable exact-likelihood training makes it a natural candidate for such post-training enhancement, which we leave for future work.



\section{Additional Qualitative Results}
\label{sec:app_qualitative}
We show additional samples from our trained NTM models in \Cref{fig:placeholder}.
\begin{figure}[H]
    \centering
    \includegraphics[width=\linewidth]{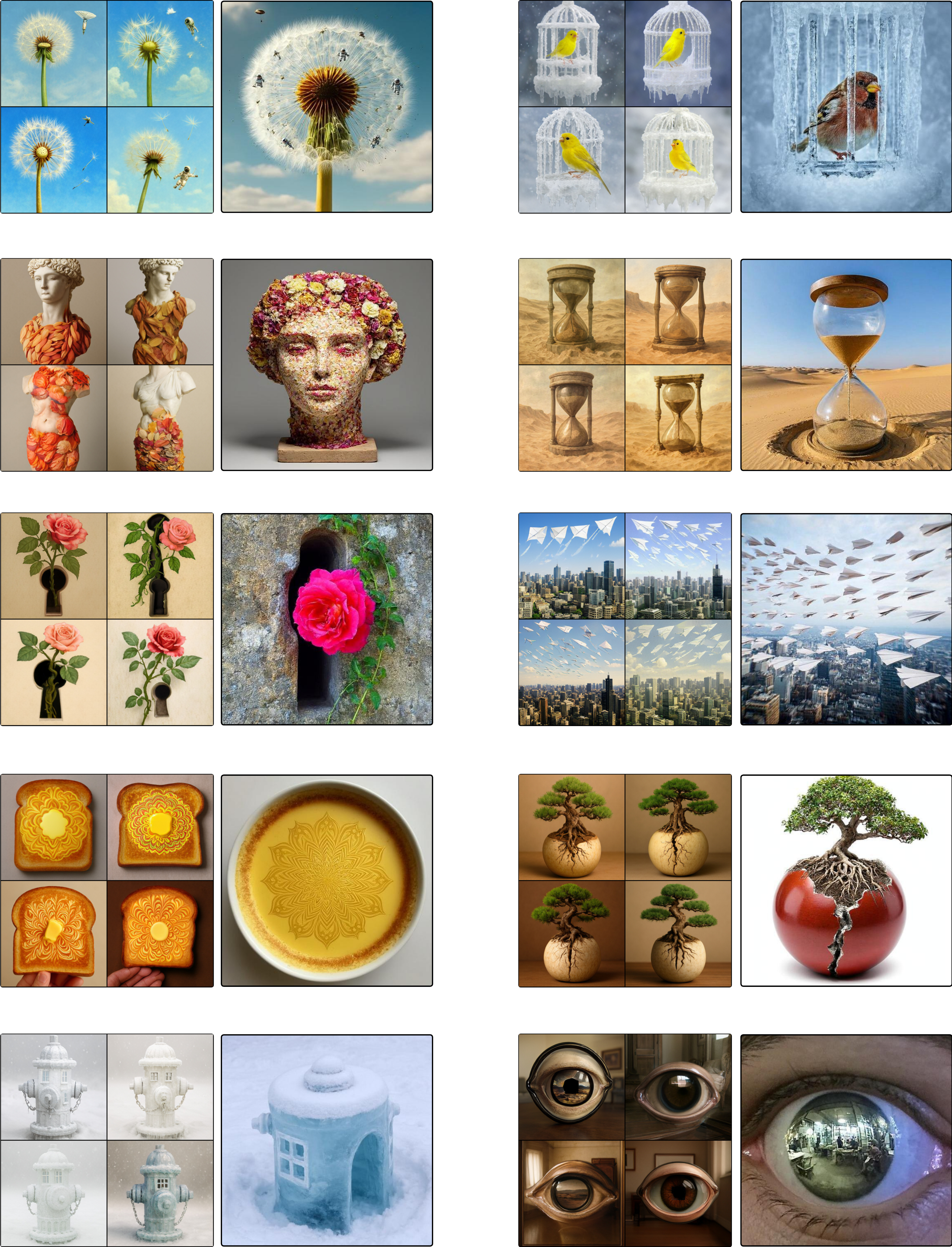}
    \caption{Additional examples from NTM trained from scratch (left) and fine-tuned from flow matching (right) under the same text prompts.}
    \label{fig:placeholder}
\end{figure}



\section{Broader Impact}
\label{sec:app_impact}

\methodname{} advances the state of the art in efficient image generation by enabling high-quality few-step sampling with exact likelihood.
While improved generative models have many beneficial applications---including creative tools, data augmentation, and scientific visualization---they also raise concerns around potential misuse for generating misleading or harmful content.
We believe that developing models with exact likelihood (as opposed to implicit or adversarial formulations) is a step toward more controllable and auditable generation, since the tractable density can support downstream applications such as anomaly detection and content verification.
We encourage the research community to develop complementary safeguards, including watermarking and provenance tracking, alongside advances in generative modeling.

\applefootnote{\textcolor{textgray}{\sffamily Apple and the Apple logo are trademarks of Apple Inc., registered in the U.S. and other countries and regions.}}

\end{document}